\newtheorem{theorem}{Theorem}
\newtheorem{lemma}{Lemma}
\newtheorem{corollary}{Corollary}
\theoremstyle{remark}
\theoremstyle{definition}
\newcommand{\name}{\textbf{FADRM}}
\newcommand{\MyRad}{\mathfrak{R}_n}
\newcommand{\R}{\mathbb{R}}
\newcommand{\Hcal}{\mathcal{H}}
\newcommand{\KL}{\operatorname{KL}}
\title{{FADRM}: \uline{F}ast and \uline{A}ccurate \uline{D}ata \uline{R}esidual \uline{M}atching for Dataset Distillation}
\author{
Jiacheng Cui$^{* 1}$, Xinyue Bi$^{*2}$ , Yaxin Luo$^1$, Xiaohan Zhao$^1$, Jiacheng Liu$^1$, Zhiqiang Shen$^{\dagger1}$ \\ [1ex]
    $^1$VILA Lab, MBZUAI \quad $^2$University of Ottawa \\[1ex]
    $^*$Equal Contribution \quad $^\dagger$Corresponding Author \\[1ex]
    \textbf{Code:}~\href{https://github.com/Jiacheng8/FADRM}{\textcolor{blue}{\textbf{FADRM (GitHub)}}}
}
\begin{document}

\maketitle

\begin{abstract}
Residual connection has been extensively studied and widely applied at the model architecture level. However, its potential in the more challenging data-centric approaches remains unexplored. In this work, we introduce the concept of {\bf \em Data Residual Matching} for the first time, leveraging data-level skip connections to facilitate data generation and mitigate data information vanishing. This approach maintains a balance between newly acquired knowledge through pixel space optimization and existing core local information identification within raw data modalities, specifically for the dataset distillation task. Furthermore, by incorporating optimization-level refinements, our method significantly improves computational efficiency, achieving superior performance while reducing training time and peak GPU memory usage by 50\%. Consequently, the proposed method {\bf  F}ast and {\bf  A}ccurate {\bf  D}ata {\bf  R}esidual {\bf  M}atching for Dataset Distillation (\name{}) establishes a new state-of-the-art, demonstrating substantial improvements over existing methods across multiple dataset benchmarks in both efficiency and effectiveness. For instance, with ResNet-18 as the student model and a 0.8\% compression ratio on ImageNet-1K, the method achieves 47.7\% test accuracy in single-model dataset distillation and 50.0\% in multi-model dataset distillation, surpassing RDED by +5.7\% and outperforming state-of-the-art multi-model approaches, EDC and CV-DD, by +1.4\% and +4.0\%.
\end{abstract}
\section{Introduction}
\label{sec:intro}

\begin{wrapfigure}{r}{0.5\textwidth}
    \centering
    \vspace{-0.5in}
    \includegraphics[width=1\linewidth]{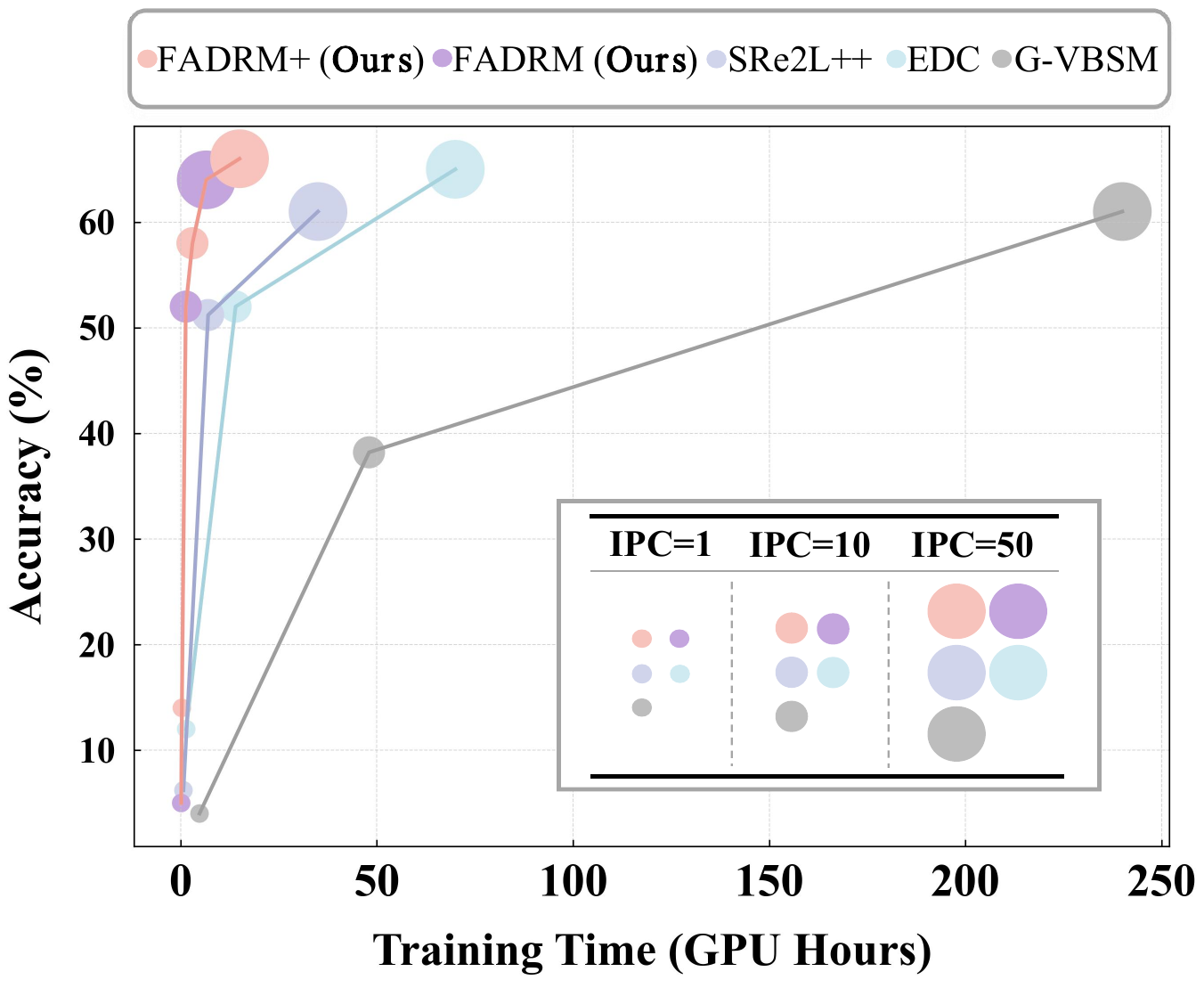}
    \vspace{-1.6em}
    \caption{Total training hours on a single RTX-4090 {\em vs.} test set accuracy, comparing prior state-of-the-art methods with our proposed framework (+ denotes multi-model distillation).}
    \label{fig:welcome}
    \vspace{-0.2in}
\end{wrapfigure}
In recent years, the computer vision and natural language processing communities have predominantly focused on model-centric research, driving an unprecedented expansion in the scale of neural networks. Landmark developments such as LLMs and MLLMs in ChatGPT~\cite{radford2018improving,achiam2023gpt}, Gemini~\cite{team2023gemini}, DeepSeek~\cite{liu2024deepseek} and other large-scale foundation models have shown the tremendous potential of deep learning architectures. However, as these models grow in complexity, the dependency on high-quality, richly informative datasets has become increasingly apparent, setting the stage for a paradigm shift towards data-centric approaches. Historically, the emphasis on building bigger and more complex models has often overshadowed the critical importance of the data. While model-centric strategies have delivered impressive results, they tend to overlook the benefits of optimizing data quality, which is essential for achieving higher performance with lower data demands. Recent advancements in data-centric research highlight the importance of improving information density, reducing the volume of required data, and expediting the training process of large-scale models, thus presenting a more holistic approach to performance enhancement.

\begin{figure*}[!t]
    \centering
    \includegraphics[width=1\textwidth]{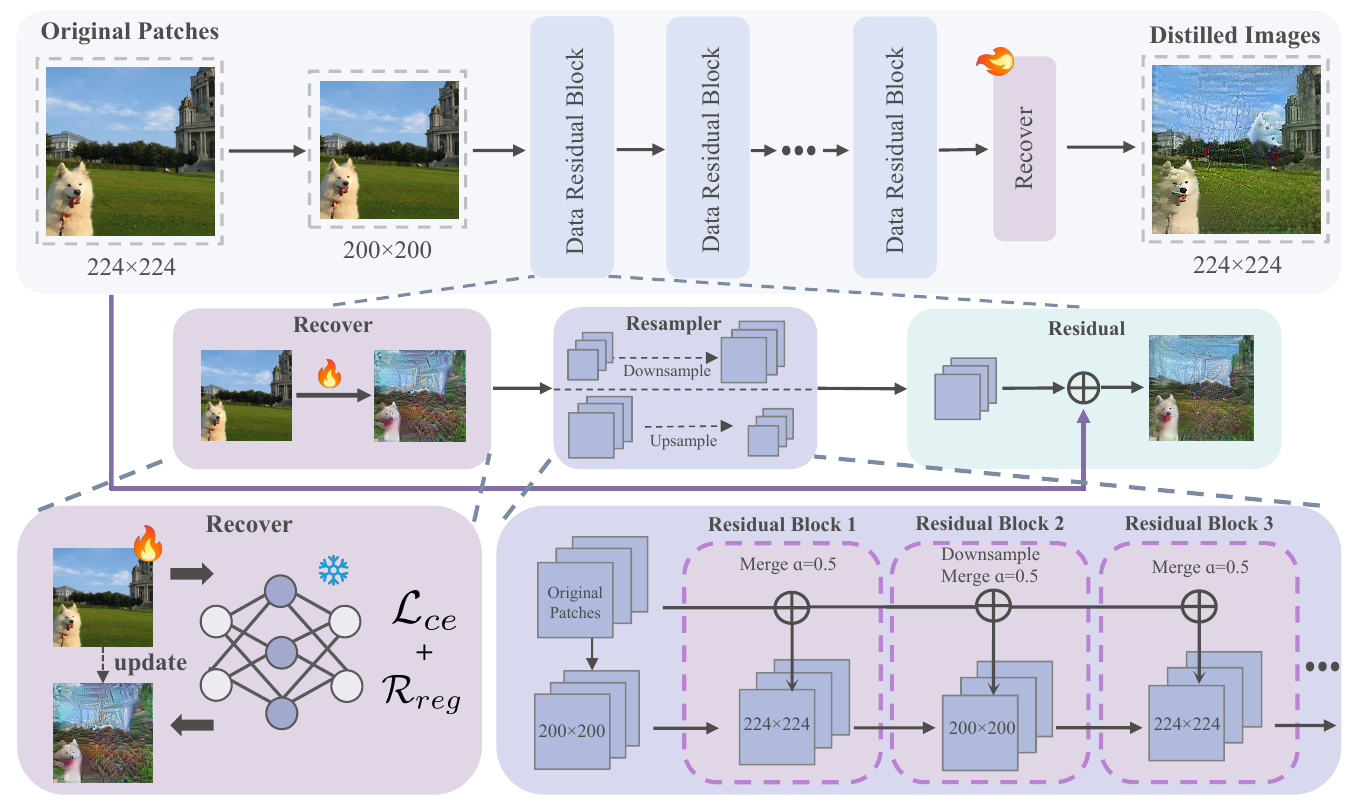}
    \caption{
    Overview of \name{}. It starts by downsampling the real data patches (both 1$\times$1 and 2$\times$2~\cite{RDED_2024} can be used as initialization and perform well in our experiments, meanwhile imposing downsampling to reduce cost). These downsampled images are subsequently processed through a series of proposed {\em Data Residual Blocks}. Each block utilizes a pretrained model to optimize the images within a predefined optimization budget, resamples them to a target resolution, and incorporates residual connections from the original patches via a mixing ratio $\alpha$. Finally, the images undergo an additional recovery stage, without residual connections, to produce the final distilled data. 
    }
    \label{fig:overview}
    \vspace{-.1in}
\end{figure*}

Within this evolving landscape, dataset distillation~\cite{wang2020datasetdistillation}, also called dataset condensation~\cite{kim2022dataset,zhaodataset,yin2024squeeze} has emerged as a pivotal area of research. The goal of dataset distillation is to compress large-scale datasets into smaller, highly informative subsets that retain the essential characteristics of the original data. This approach not only accelerates the training process of complex models but also mitigates the storage and computational challenges associated with massive datasets. Despite significant progress, many existing state-of-the-art methods in dataset distillation still struggle with issues related to scalability, generalization across diverse data resolutions, realism and robustness.

While residual connections have been well studied and widely implemented in the model architecture design field, primarily to prevent gradient vanishing and ensure effective feature propagation, their potential within data-centric paradigms remains largely unexplored. At the model level, residual connections help maintain the flow of gradients and enable deeper network architectures. In contrast, at the data level, similar connections can potentially prevent the loss of critical original dataset information and improve scalability and generalization across architectures during the data distillation process. This observation and design introduce a novel perspective on leveraging residual mechanisms beyond traditional model optimization, especially in the challenging domain of dataset optimization.

In this work, we introduce for the first time the concept of {\em Data Residual Matching} for dataset distillation. Our approach leverages data-level skip connections, a novel idea for data-centric task to prevent real data information vanishing in multi-block data synthesis architecture. We call our method {\bf F}ast and {\bf A}ccurate {\bf D}ata {\bf R}esidual {\bf M}atching (\name{}), which, as shown in Fig.~\ref{fig:overview}, employs a multi-resolution image recovery scheme that utilizes image resolution shrinkage and expansion in a residual manner, thereby capturing fine-grained details and facilitating the recovery of both global and local information. This balance between newly acquired knowledge through pixel space optimization and the preservation of existing core local information within raw data modalities marks a significant advancement in dataset distillation. By integrating these data-level residual connections, our approach enhances the generalization and robustness of the distilled datasets.

Exhaustive empirical evaluations of our proposed \name{} on CIFAR-100~\cite{cifar10}, Tiny-ImageNet~\cite{le2015tiny}, ImageNet-1k~\cite{deng2009imagenet} and its subset demonstrate that it not only accelerates the dataset distillation process by 50\% but also achieves superior accuracy that beats all previous state-of-the-art methods on both accuracy and generation speed. This approach effectively {\bf bridges the gap between model-centric and data-centric paradigms}, providing a robust solution to the challenges inherent in high-quality data generation. Our contributions in this paper are as follows: 
\begin{itemize}
\item We extend conventional residual connection from the model level to the data level area, and present for the first time a simple yet effective, theoretically grounded residual connection design for data generation to enhance data-centric task.
\item We introduce a novel dataset distillation framework based on the proposed {\em data residual matching}, incorporating multi-scale residual connections in data synthesis to improve both efficiency and accuracy.
\item Our approach achieves state-of-the-art results across multiple datasets, such as CIFAR-100, Tiny-ImageNet and ImageNet-1k, while being more efficient and requiring less computational cost than all previous methods.
\end{itemize}
\section{Related Work}
\label{sec:related_work}

\textbf{Dataset Distillation} aims to synthesize a compact dataset that retains the critical information of a larger original dataset, enabling efficient training while maintaining performance comparable to the full dataset. Overall, the matching criteria include {\em Meta-Model Matching}~\citep{wang2020datasetdistillation,nguyen2021dataset,loo2022efficient,zhou2022dataset,deng2022remember,he2024multisize}, {\em Gradient Matching}~\citep{zhaodataset,zhao2021dataset,lee2022dataset,kim2022dataset,zhou2024improve}, {\em Trajectory Matching}~\citep{MTT,cui2023scaling,chen2023dataset,guo2024lossless}, Distribution Matching~\citep{DBLP:conf/wacv/ZhaoB23, Wang_2022_CVPR,liu2022dataset,KFS,Sajedi_2023_ICCV,shin2024frequency,xue2024towards}, and {\em Uni-level Global Statistics Matching}~\citep{yin2024squeeze, GBVSM_2024,shao2024elucidating,yin2024dataset,cui2025dataset,xiao2024large}.
Dataset distillation on large-scale datasets has recently attracted significant attention from the community. For a detailed overview, it can be referred to the newest survey works~\citep{shang2025dataset,liu2025evolution} on this topic.
 
\noindent{\bf Efficient Dataset Distillation.} Several methods improve the computational efficiency of dataset distillation. TESLA~\cite{cui2023scaling} accelerates MTT~\citep{MTT} via batched gradient computation, avoiding full graph storage and scaling to large datasets. DM~\citep{DBLP:conf/wacv/ZhaoB23} sidesteps bi-level optimization by directly matching feature distributions. SRe$^2$L~\citep{yin2024squeeze} adopts a Uni-Level Framework that aligns synthetic data with pretrained model statistics. G-VBSM~\cite{GBVSM_2024} extends this by using lightweight model ensembles. EDC~\citep{shao2024elucidating} further boosts efficiency through real data initialization, accelerating convergence.

\noindent{\bf Residual Connection in Network Design.} Residual connections have played a pivotal role in advancing deep learning. Introduced in ResNet~\citep{he2016deep} to alleviate vanishing gradients, they enabled deeper networks by improving gradient flow. This idea was extended in Inception-ResNet~\citep{szegedy2017inception} through multi-scale feature integration, and further generalized in DenseNet~\citep{huang2017densely} via dense connectivity and feature reuse. Residual designs have also been central to Transformer architectures~\citep{vaswani2017attention}.

\section{Approach}
\label{sec:approach}

\noindent{\bf Preliminaries.}
Let the original dataset be denoted by $\mathcal{O} = \{(x_{i}, y_{i})\}_{i=1}^{|\mathcal{O}|}$, and let the goal of \textit{dataset distillation} be to construct a compact synthetic dataset $\mathcal{C} = \{(\tilde{{x}}_{j}, \tilde{y}_{j})\}_{j=1}^{|\mathcal{C}|}, \text{with } |\mathcal{C}| \ll |\mathcal{O}|$, such that the model \( f_{\theta_{\mathcal{C}}} \) trained on \( \mathcal{C} \) exhibits similar generalization behavior to the model \( f_{\theta_{\mathcal{O}}} \) trained on \( \mathcal{O} \). This objective can be formulated as minimizing the performance gap over the real data distribution:
\begin{equation}
    \operatorname*{arg\,min}_{\mathcal{C}, |\mathcal{C}|} 
    \sup_{(x, y) \sim \mathcal{O}} 
    \left| \mathcal{L} \left( f_{\theta_{\mathcal{O}}}(x), y \right) - 
    \mathcal{L} \left( f_{\theta_{\mathcal{C}}}(x), y \right) \right|
\label{Eq:preliminary}
\end{equation}
where the parameters \( \theta_{\mathcal{O}} \) and \( \theta_{\mathcal{C}} \) are obtained via empirical risk minimization:
\begin{equation}
    \theta_{\mathcal{O}} = \arg\min_\theta \mathbb{E}_{(x, y) \sim \mathcal{O}}[\mathcal{L}(f_\theta(x), y)], \quad
\theta_{\mathcal{C}} = \arg\min_\theta \mathbb{E}_{(\tilde{x}, \tilde{y}) \sim \mathcal{C}}[\mathcal{L}(f_\theta(\tilde{{x}}), \tilde{y})].
\end{equation}
\noindent The goal is to generate \( \mathcal{C} \) in order to maximize model performance with minimal data. Among existing methods, a notable class directly optimizes synthetic data without access to the original dataset, referred to as \emph{uni-level optimization}. While effective, this approach faces two key limitations: (1) progressive information loss during optimization, termed \emph{information vanishing}, and (2) substantial computational and memory costs for large-scale synthesis, limiting real-world applicability.

\noindent{\bf Information Vanishing.}
In contrast to images distilled using bi-level frameworks, the information content in images generated by uni-level methods (e.g., EDC~\cite{shao2024elucidating}) is fundamentally upper-bounded, as the original dataset is not utilized during synthesis (see Theorem~\ref{theorem:Bounded Information in BN-Aligned Synthetic Data}). As optimization progresses, the information density initially increases but eventually deteriorates due to the accumulation of local feature loss. This degradation leads to information vanishing (see Fig.~\ref{fig:info_vanish}), which significantly reduces the fidelity of the distilled images and limits their effectiveness in downstream tasks.

\begin{theorem} [Proof in Appendix~\ref{proof:Bounded Information in BN-Aligned Synthetic Data}]
\label{theorem:Bounded Information in BN-Aligned Synthetic Data}
Let $f_\theta$ be a pretrained neural network on original dataset $\mathcal{O}$ with fixed parameters and BatchNorm layers' mean and variance $\mathbf{BN}^{\text{RM}}$ and $\mathbf{BN}^{\text{RV}}$. Let $\tilde{{x}}$ denote an image optimized by minimizing the following loss:
   $ \mathcal{L}(\mathcal{C}) =  {\ell}_{CE}(f_{\theta}(\tilde{{x}}), \tilde{y})+ \lambda(\sum_{l} \left\| \mu_{l} (\tilde{{x}}) - \mathbf{BN}_{l}^{\text{RM}} \right\|_{2} + \sum_{l} \left\| \sigma_{l}^2 (\tilde{{x}}) - \mathbf{BN}_{l}^{\text{RV}} \right\|_{2})
   $.
Define:
\begin{equation}
    H\bigl(f_\theta\bigr)\;=\;\sup_{{x}\in\mathrm{supp}(\mathcal{O})}H\bigl(f_\theta({x})\bigr),
\end{equation}
as the maximum per‐sample Shannon entropy of the network's output.
Then, the mutual information between the optimized distilled dataset $\mathcal{C} = \{(\tilde{x}_{j}, \tilde{y}_{j})\}_{j=1}^{|\mathcal{C}|}$ and the original dataset $\mathcal{O}$ is bounded by:
\begin{equation}
    I(\mathcal{C} ; \mathcal{O}) \le |\mathcal{C}| H(f_{\theta}).
\end{equation} 
\end{theorem}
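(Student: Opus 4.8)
The plan is to argue that $\mathcal{O}$ can influence $\mathcal{C}$ only through the frozen network, and then to bound, sample by sample, how much of that influence can survive. First I would observe that the optimization generating each $\tilde{x}_j$ never touches $\mathcal{O}$ directly: it reads $\mathcal{O}$ only through the fixed weights $\theta$ and the stored moments $\mathbf{BN}^{\mathrm{RM}},\mathbf{BN}^{\mathrm{RV}}$. Hence $\mathcal{O}\to(\theta,\mathbf{BN}^{\mathrm{RM}},\mathbf{BN}^{\mathrm{RV}})\to\mathcal{C}$ is a Markov chain, and since that tuple is held fixed throughout synthesis, conditioning on it renders $\mathcal{C}$ independent of $\mathcal{O}$. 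Thus the residual dependence $I(\mathcal{C};\mathcal{O})$ must be carried entirely by the \emph{behaviour} of $f_\theta$ on the synthesized points themselves, and it is this ``interface'' that I will quantify.

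Second, I would make the per-sample accounting explicit. Writing $Z_j=f_\theta(\tilde{x}_j)$ for the output distribution the teacher assigns to the $j$-th synthetic image, the loss $\mathcal{L}(\mathcal{C})$ ties $\tilde{x}_j$ to $\mathcal{O}$ only through the term $\ell_{CE}(Z_j,\tilde{y}_j)$: the statistics-matching term depends only on the frozen moments (hence is $\mathcal{O}$-free once $\theta$ is fixed), and each $\tilde{y}_j$ is assigned to its slot a priori. Treating $(Z_1,\dots,Z_{|\mathcal{C}|})$ as a sufficient statistic for how $\mathcal{O}$ acts on $\mathcal{C}$, the data-processing inequality together with the chain rule and ``conditioning reduces entropy'' yields
\[
  I(\mathcal{C};\mathcal{O})\;\le\;I\bigl((Z_1,\dots,Z_{|\mathcal{C}|});\,\mathcal{O}\bigr)\;\le\;\sum_{j=1}^{|\mathcal{C}|}H(Z_j)\;\le\;|\mathcal{C}|\cdot\sup_{x}H\bigl(f_\theta(x)\bigr).
\]
Finally, since the synthetic images are initialized from real patches and the statistics term keeps their activations in the regime produced by genuine data, their per-sample output entropy never exceeds the worst case over $\mathrm{supp}(\mathcal{O})$, so the supremum may be restricted to $\mathrm{supp}(\mathcal{O})$ — which is exactly $H(f_\theta)$, giving the claim. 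This argument uses only the ceiling, not the transient rise-then-fall of information density depicted in Fig.~\ref{fig:info_vanish}.

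The step I expect to be the main obstacle is the sufficient-statistic claim, $\mathcal{O}\perp\mathcal{C}\mid(Z_1,\dots,Z_{|\mathcal{C}|})$, because the optimizer in fact sees input-space gradients $\nabla_{\tilde{x}_j}\ell_{CE}(f_\theta(\tilde{x}_j),\tilde{y}_j)$ along an entire trajectory, not merely the final outputs; one must argue that, conditioned on the fixed $\theta$, every point of that trajectory is a deterministic function of the realized outputs and of $\mathcal{O}$-independent randomness (initialization, BN targets). A clean route is to fold the whole per-sample optimization into a single auxiliary variable and show its $\mathcal{O}$-dependence factors through $\theta$ plus the finite sequence of evaluated outputs — each contributing at most $H(f_\theta)$ — and then absorb the optimization-budget factor; alternatively one first establishes a crude bound with $\log$ of the number of classes in place of $H(f_\theta)$ and tightens the constant using the near-manifold property of the iterates. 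A secondary subtlety is reconciling the entropy of the continuous vector $Z_j$ with the categorical entropy $H(f_\theta(x))$ in the statement; this is cleanest if one reads out $Z_j$ as a teacher-assigned discrete (soft) label, whose conditional entropy given $\tilde{x}_j$ is literally $H(f_\theta(\tilde{x}_j))$.
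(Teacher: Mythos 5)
Your overall skeleton — $\mathcal{O}$ reaches $\mathcal{C}$ only through the frozen model, apply the data-processing inequality, then decompose per sample — is the same as the paper's. But the middle of your argument takes a genuinely different and, as it happens, harder route. The paper conditions on the model itself: it posits the Markov chain $\mathcal{O}\to f_\theta\to\tilde{x}_j$ (with the BN running statistics folded into $f_\theta$), applies Lemma~\ref{lemma:data-processing} to get $I(\tilde{x}_j;\mathcal{O})\le I(f_\theta;\mathcal{O})=H(f_\theta)$, and then sums over $j$ by subadditivity. Because the entire synthesis pipeline — gradients, trajectories, and all — is a deterministic function of $f_\theta$, the label, and $\mathcal{O}$-independent initialization, conditioning on $f_\theta$ d-separates $\mathcal{C}$ from $\mathcal{O}$ with no further argument. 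You instead condition on the realized teacher outputs $(Z_1,\dots,Z_{|\mathcal{C}|})$, and the step $I(\mathcal{C};\mathcal{O})\le I((Z_j)_j;\mathcal{O})$ then requires the sufficient-statistic claim $\mathcal{O}\perp\mathcal{C}\mid(Z_j)_j$, which you correctly flag as the main obstacle — and which is false as stated, since the optimizer sees input-space gradients of $f_\theta$ along the whole trajectory, not just final outputs. This difficulty is entirely self-inflicted: replacing your interface variable $(Z_j)_j$ with $f_\theta$ itself dissolves it and recovers the paper's argument.

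The remaining gap is one you share with the paper rather than one you introduce. Your final inequality $H(Z_j)\le\sup_x H(f_\theta(x))$ does not hold under either reading of $Z_j$: if $Z_j$ is the (random) output probability vector, its entropy as a random object is unrelated to the categorical entropy of $f_\theta$ at a point; if $Z_j$ is a discrete label sampled from $f_\theta(\tilde{x}_j)$, then only the \emph{conditional} entropy $H(Z_j\mid\tilde{x}_j)$ is controlled by $\sup_x H(f_\theta(x))$, and $H(Z_j)\ge H(Z_j\mid\tilde{x}_j)$ goes the wrong way. The paper's corresponding step — writing $I(f_\theta;\mathcal{O})=H(f_\theta)$ and identifying the entropy of the model-as-random-variable with the defined quantity $\sup_{x\in\mathrm{supp}(\mathcal{O})}H(f_\theta(x))$ — is an assertion rather than a derivation, so neither version closes this link rigorously. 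In short: adopt the paper's choice of conditioning variable to eliminate your hardest step, and be aware that the identification of the model's information content with its maximum per-sample output entropy is the load-bearing assumption in both proofs.
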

The insight of this theorem is that if the pretrained model $f_\theta$ is overly confident on all inputs (low maximum entropy), then $H(f_\theta)$ is small, and thus the distilled set, no matter how we optimize it, cannot encode a large amount of information about $\mathcal{O}$. 

\begin{figure}[htbp]
    \centering
    \begin{subfigure}[b]{0.47\textwidth}
        \includegraphics[width=\textwidth]{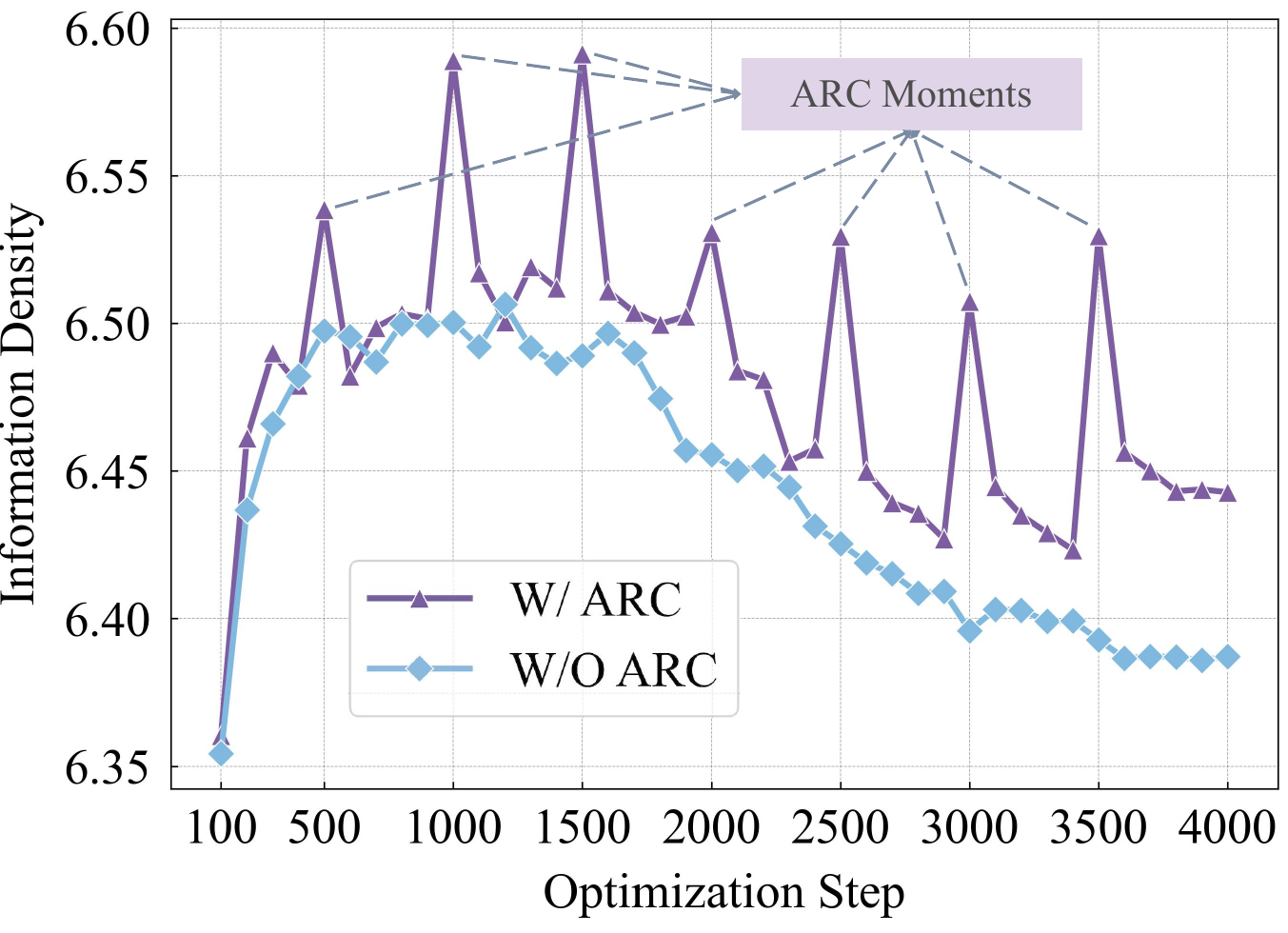}
    \end{subfigure}
    \hfill
    \begin{subfigure}[b]{0.47\textwidth}
        \includegraphics[width=\textwidth]{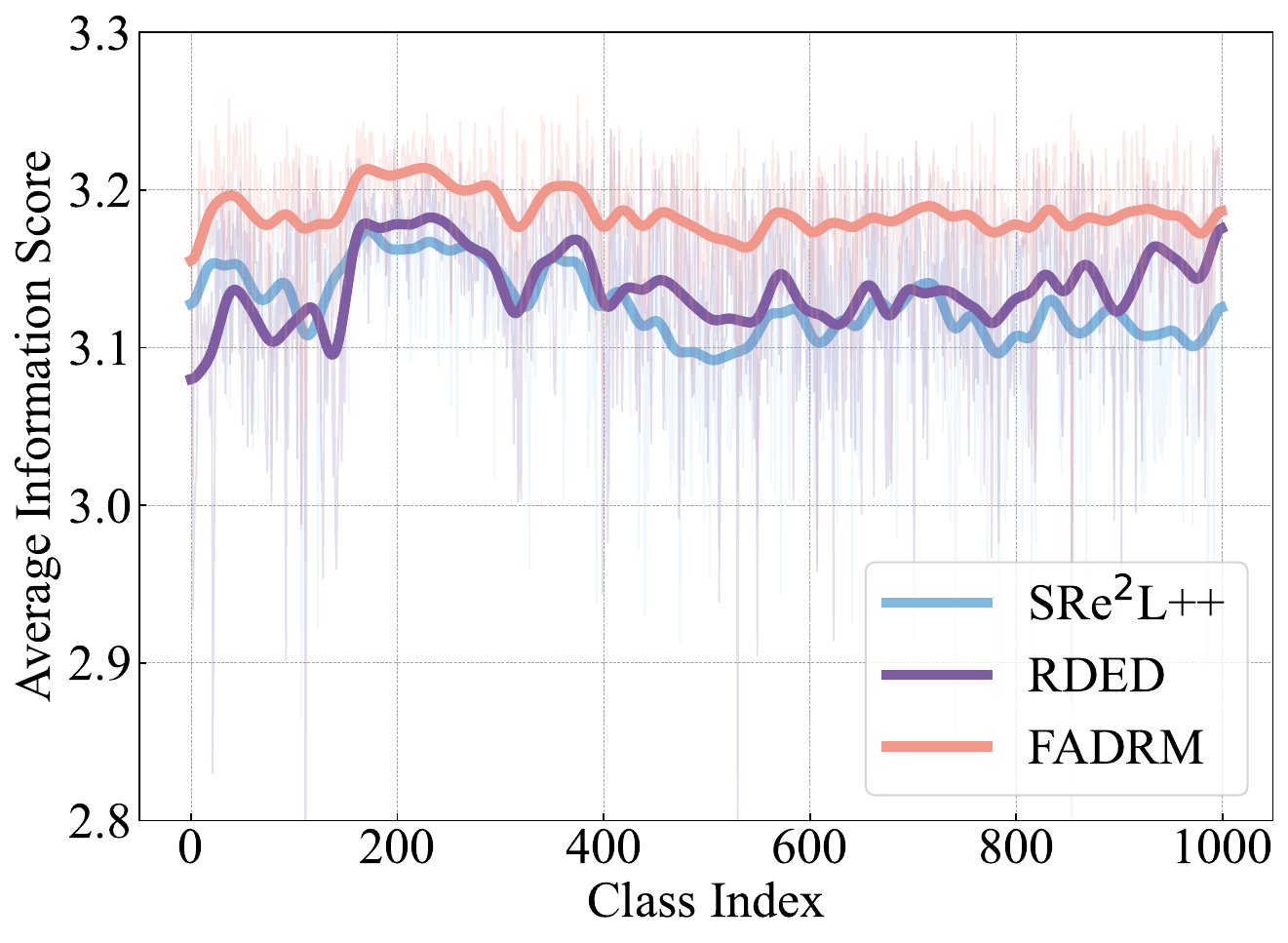}
    \end{subfigure}
    \caption{The above figures illustrate the phenomenon of \textit{Information Vanishing}. The \textbf{Left} Figure shows the evolution of information density across optimization steps, quantified through feature-level entropy using a pretrained ResNet-18~\cite{he2016deep}, comparing uni-level optimization (W/O \emph{ARC}) with our \name{} (W/ \emph{ARC}). The gray lines highlight the information density enhancement achieved through residual connection. The \textbf{Right} Figure shows the comparison of information scores (higher is better) across different classes, measured by \textit{pixel-level entropy}, among \name{}, SRe$^2$L++, and RDED. All experiments are conducted on a distilled ImageNet-1k dataset with IPC=10.}
    \label{fig:info_vanish}
\end{figure}

\noindent{\bf Computational Challenges.}
Although uni-level frameworks exhibit scalability to large-scale datasets, the overall time required to generate a large distilled dataset remains prohibitively expensive. As illustrated in Fig.~\ref{fig:welcome}, EDC~\cite{shao2024elucidating} requires nearly 70 hours to generate a 50 IPC distilled dataset, which limits its applicability in contexts involving repeated runs, large-scale data synthesis, or comprehensive empirical analysis. This motivates the need for more computationally efficient optimization strategies.

\subsection{Overview of \name{}}
The proposed \name{} framework, as illustrated in Fig.~\ref{fig:overview} and detailed in Algorithm~\ref{alg:FADRM}, addresses the limitations of existing uni-level optimization frameworks by integrating three proposed components: (1) \emph{MPT}: a mixed-precision training scheme that accelerates optimization and reduces computation by casting model parameters to lower-precision formats, (2) \emph{MRO}: a multiple resolution optimization that improves computational efficiency, and (3) \emph{ARC}: an adjustable embedded residual mechanism designed to seamlessly integrate essential features from the original dataset. This framework ensures both efficiency and generation fidelity in the optimization process.

\begin{algorithm}
\caption{\textbf{FADRM}: Residual Matching for Dataset Distillation}
\label{alg:FADRM}
\begin{algorithmic}[1]
\Require Recover model $f_\theta$, budget $\mathcal{B}$, real patches $\mathbf{P}_s$, merge ratio $\alpha$, downsampled resolutions $D_{\text{ds}}$, original resolutions $D_{\text{orig}}$, number of \emph{ARC}s $k$
\Ensure Distilled image $\tilde{x}_{\mathcal{B}}$
\State $b \gets \lfloor \mathcal{B}/(k{+}1) \rfloor$, \quad $\tilde{x}_0 \gets$ \textsc{Resample}($\mathbf{P}_s$, $D_{\text{ds}}$)
\For{$i=1$ \textbf{to} $k$}
    \For{$t=1$ \textbf{to} $b$}
        \State $\tilde{x}_{(i-1)b+t} \gets$ \textsc{GradStep}$(f_\theta, \tilde{x}_{(i-1)b+t-1})$ \Comment{Optimize $\tilde{x}$ to align the property of $f_\theta$}
    \EndFor
\State $\tilde{x}_{ib} \gets 
\begin{cases}
    \textsc{Resample}(\tilde{x}_{ib}, D_{\text{orig}}), & \text{if } \texttt{Shape}(\tilde{x}_{ib}) = D_{\text{ds}} \\
    \textsc{Resample}(\tilde{x}_{ib}, D_{\text{ds}}), & \text{otherwise}
\end{cases}$
\State $\tilde{x}_{ib} \gets \alpha \tilde{x}_{ib} + (1{-}\alpha)\cdot \textsc{Resample}(\mathbf{P}_s, \texttt{Shape}(\tilde{x}_{ib}))$
\EndFor
\For{$t=1$ \textbf{to} $\mathcal{B} - kb$}
    \State $\tilde{x}_{kb+t} \gets$ \textsc{GradStep}$(f_\theta, \tilde{x}_{kb+t-1})$
\EndFor
\State \Return $\tilde{x}_{\mathcal{B}}$
\end{algorithmic}
\end{algorithm}

\subsection{Mixed Precision Training for Data Generation}
Previous uni-level frameworks typically retain a fixed training pipeline, seeking efficiency through architectural or initialization-level changes. In contrast, we explicitly optimize the training process by incorporating Mixed Precision Training (MPT)~\cite{micikevicius2017mixed}. Specifically, we convert the model parameters \(\theta\) from FP32 to FP16 and utilize FP16 for both logits computation and cross-entropy loss evaluation. To preserve numerical stability and ensure accurate distribution matching, we retain the computation of the divergence to the global statistics (Appendix~\ref{sec:Optimization_Details}), as well as the gradients of the total loss with respect to \(\tilde{x}\) in FP32. By integrating \emph{MPT}, our framework significantly reduces both GPU memory consumption and training time by approximately 50\%, thereby significantly enhancing efficiency.

\subsection{Multi-resolution Optimization} 
Multi-Resolution Optimization (\emph{MRO}) enhances computational efficiency by optimizing images across multiple resolutions, unlike conventional methods that operate on a fixed input size.
Naturally, low-resolution inputs can reduce computational cost for the model, they often come at the expense of performance. To mitigate this, our method periodically increases the data resolution back at specific stages, resulting in a mixed-resolution optimization process, as illustrated in Fig.~\ref{fig:overview} (bottom-right).
 This approach is particularly beneficial for large-scale datasets (e.g., ImageNet-1K), where direct high-resolution optimization is computationally inefficient. Notably, optimization time scales significantly with input size for large datasets but remains stable for smaller ones (input size $\leq$ 64). Thus, \emph{MRO} is applied exclusively to large-scale datasets, as downscaling offers no efficiency gains for smaller ones.
Specifically, given an initialized image \( \mathbf{P}_s \in \mathbb{R}^{D_\text{orig} \times D_\text{orig} \times C} \), we first downsample it into a predefined resolution $D_{\text{ds}}$ utilizing bilinear interpolation (detailed in Appendix~\ref{Resampling}):
\begin{equation}
    \tilde{x}_0 = \text{Resample}(\mathbf{P}_s, D_{\text{ds}}), \quad D_{\text{ds}} < D_\text{orig}
\end{equation}

The downscaled images \( \tilde{x}_0\) are then optimized within a total budget \( b \), yielding the refined version \( \tilde{x}_b\). Subsequently, \( \tilde{x}_b\) are upscaled to their original dimensions:
\begin{equation}
    \tilde{x}_b = \text{Resample}(\tilde{x}_b,D_{\text{orig}})
\end{equation}

\noindent The upscaled image \( \tilde{x}_b \) is further optimized within the same budget \( b \) to recover information lost during the downscaling and upscaling processes. This iterative procedure (downscaling optimization and upscaling optimization) is repeated until the total optimization budget \(\mathcal{B}\) is exhausted. To ensure \emph{MRO} achieves efficiency gain without compromising quality, selecting an appropriate \(D_\text{ds}\) is critical. Excessively small \(D_\text{ds}\) risks significant information loss, degrading distilled data's quality, while overly large \(D_\text{ds}\) offers negligible efficiency benefits. Thus, \(D_\text{ds}\) must be carefully calibrated to balance efficiency and effectiveness. 

\noindent\textbf{Saved Computation by \emph{MRO}.}
Assume image-level convolution cost scales as \( \mathcal{O}(D^2 C) \). The baseline method performs all \( \mathcal{B} \) steps at full resolution \( D_{\text{orig}} \), yielding:
\begin{equation}
    \text{Cost}_{\text{baseline}} = \mathcal{B} \cdot \mathcal{O}(D_{\text{orig}}^2 C)
\end{equation}

\noindent \name{} performs \( k \) alternating-resolution stages of \( b = \lfloor \mathcal{B} / (k{+}1) \rfloor \) steps, with approximately half at downsampled resolution \( D_{\text{ds}} \). Let \( r = (D_{\text{ds}} / D_{\text{orig}})^2 \). The normalized cost is:
\begin{equation}
\frac{\text{Cost}_{\text{\emph{MRO}}}}{\text{Cost}_{\text{baseline}}} 
= 1 - \frac{b}{\mathcal{B}} \cdot \left( \left\lceil \tfrac{k}{2} \right\rceil (1 - r) \right)
\label{eq:mro-savings}
\end{equation}

Under fixed \( \mathcal{B},  k \), the cost ratio decreases linearly with \( 1 - r \). Smaller \( r \) (i.e., more aggressive downsampling) yields greater savings, but may compromise data fidelity. This trade-off highlights the role of \( r \) in balancing efficiency and representation quality during distillation.

\subsection{Adjustable Residual Connection}
\label{sec:arc}
\begin{wrapfigure}{r}{0.5\textwidth}
    \vspace{-.17in}
    \centering
   \includegraphics[width=0.5\columnwidth]{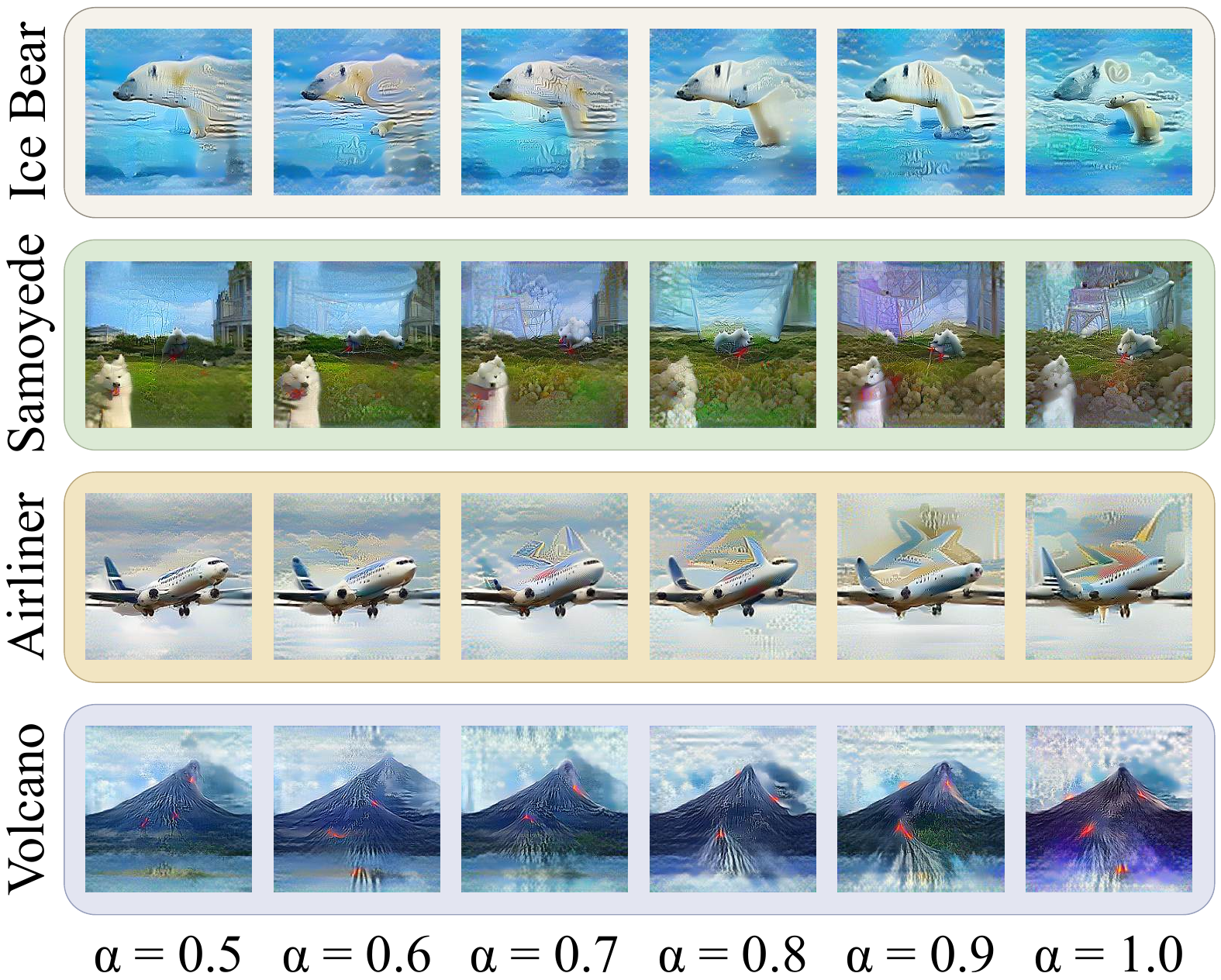} 
    \caption{Visualization of the distilled images with varying merge ratios using \name{}.}
    \label{fig:alpha_visualization}
\end{wrapfigure}
In uni-level optimization, the absence of the original dataset leads to information vanishing which significantly degrades the feature representation of the distilled dataset. To mitigate this issue, we introduce Adjustable Residual Connection (\emph{ARC}), a core mechanism that mitigates information vanishing (see Fig.~\ref{fig:info_vanish}) and improves the robustness of the distilled data (see Theroem~\ref{theorem:ARC improves robustness}). Essentially, \emph{ARC} iteratively fuses the intermediate optimized image \( \tilde{x}_t \in \mathbb{R}^{D_t \times D_t \times C} \)  at iteration \( t \) with the resized initialized data patches \(\tilde{\mathbf{P}}_t\), which contain subtle details from the original dataset. Formally, the update rule is defined as:
\begin{equation}
    \begin{aligned}
        \tilde{x}_t &= \alpha  \tilde{x}_t + (1-\alpha) \text{Resample}(\mathbf{P}_s, D_t)
    \end{aligned}
\end{equation}
where \( \alpha \in [0, 1] \) is a tunable merge ratio governing the contribution of original dataset information. A smaller \( \alpha \) strengthens the integration of details from \( \mathbf{P}_s \), whereas a larger \( \alpha \) prioritizes the preservation of the global features in the \( \tilde{x}_t \). This trend is visualized in Fig.~\ref{fig:alpha_visualization}. \emph{ARC} introduces a hyperparameter \( k \), which determines the frequency of residual injections. Given a total optimization budget of \( \mathcal{B} \), the training process is divided into \( k+1 \) segments, where residual connections occur after every \( b = \lfloor \mathcal{B} / (k+1) \rfloor \) iterations. The update follows:
\begin{equation}
    \tilde{\mathbf{P}}_{ib} = \operatorname{Resample} \big( \mathbf{P}_s, D_{ib} \big), \quad \tilde{x}_{ib} = \alpha \tilde{x}_{ib} + (1-\alpha) \tilde{\mathbf{P}}_{ib}.
\end{equation}
where \( i \in \{1, 2, \dots, k\} \) denotes the index of the residual injection stage, and \( D_{ib} \) indicates the spatial resolution of the intermediate image at the corresponding iteration \( t = ib \). The final phase consists of purely optimization without additional residual injections. Notably, \emph{ARC} performs a per-element weighted fusion of two image tensors with negligible overhead. With a complexity of \( \mathcal{O}(H_t W_t C) \), it scales linearly with the number of pixels and channels, making it well-suited for high-resolution data.

\begin{theorem}[Proof in Appendix~\ref{proof:ARC improves robustness}]
\label{theorem:ARC improves robustness}
Let \( \mathcal{H} \) be a class of functions \( h: \mathbb{R}^d \to \mathbb{R} \), and let  \( h \) be Lipschitz-continuous with constant \( L_h > 0 \), and the loss function $\ell$ be Lipschitz-continuous with constant \( L_l > 0 \) and bounded within a finite range \( [0, B] \). Consider: 1. Optimized perturbation added to the original data: \( \tilde{\mathcal{C}}^{res} = \{ \tilde{x}_{i}^{res}, \tilde{y}_{i}^{res} \}_{i=1}^n \). 2. residual injected dataset (FADRM): \( \tilde{\mathcal{C}}_{\mathrm{FADRM}} = \{ \tilde{x}_{i},\tilde{y}_{i} \}_{i=1}^n \). 3. patches selected from the original dataset: \( \mathcal{O} = \{ x_i , y_i\}_{i=1}^n \). 4. discrepancy \( \Delta := \frac{1}{n} \sum_{i=1}^n \| \tilde{x}_i^{res} - x_i \| \). Let \( h_{\mathrm{res}} \in \mathcal{H} \) denote the hypothesis trained on \( \tilde{\mathcal{C}}^{res} \), and \( h_{\mathrm{FADRM}} \in \mathcal{H} \) be trained on \( \tilde{\mathcal{C}}_{\mathrm{FADRM}} \). Define the corresponding empirical risks: $\widehat{\mathcal{L}}_{\mathrm{res}} := \frac{1}{n} \sum_{i=1}^n \ell(h_{\mathrm{res}}(\tilde{x}_i^{res}), \tilde{y}_i^{res}), \quad \widehat{\mathcal{L}}_{\mathrm{FADRM}} := \frac{1}{n} \sum_{i=1}^n \ell(h_{\mathrm{FADRM}}(\tilde{x}_{i}), \tilde{y}_i)$.
Suppose the following conditions hold:

\begin{equation}
    \label{eq:residual-upper-bound}
    \MyRad(\Hcal \circ \mathcal{O}) - \MyRad(\Hcal \circ \tilde{\mathcal{C}}^{res}) < -\frac{L_h\Delta(L_l+2B\alpha)}{2B}
\end{equation}

Where $\alpha$ is the merge ratio. Under the condition specified in Equation~\eqref{eq:residual-upper-bound}, the generalization bound of \( h_{\mathrm{FADRM}} \) is rigorously shown to be tighter than that of \( h_{\mathrm{res}} \), i.e.,
\begin{equation}
\widehat{\mathcal{L}}_{\mathrm{FADRM}} + 2B \cdot \MyRad(\mathcal{H} \circ \tilde{\mathcal{C}}_{\mathrm{FADRM}}) 
< \widehat{\mathcal{L}}_{\mathrm{res}} + 2B \cdot \MyRad(\mathcal{H} \circ \tilde{\mathcal{C}}^{res}).
\end{equation}

\end{theorem}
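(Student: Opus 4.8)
The plan is to bound the two quantities that differ between the two generalization bounds---the empirical risk term and the Rademacher complexity term---and show their difference is controlled by $\Delta$ and $\alpha$. The standard Rademacher generalization bound tells us that each of $\widehat{\mathcal{L}}_{\mathrm{res}} + 2B\,\MyRad(\mathcal{H}\circ\tilde{\mathcal{C}}^{res})$ and $\widehat{\mathcal{L}}_{\mathrm{FADRM}} + 2B\,\MyRad(\mathcal{H}\circ\tilde{\mathcal{C}}_{\mathrm{FADRM}})$ upper-bounds the true risk of the respective hypothesis; comparing these two bounds amounts to comparing the sums of their constituent pieces directly, which is exactly the inequality we want to establish. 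So I would not invoke the bound on the true risk at all, but rather compare the two expressions term-by-term.

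First I would control $\widehat{\mathcal{L}}_{\mathrm{FADRM}} - \widehat{\mathcal{L}}_{\mathrm{res}}$. Writing $\tilde{x}_i = \alpha \tilde{x}_i^{res} + (1-\alpha) x_i$ (the residual fusion of the optimized-perturbation point with the original patch), we have $\|\tilde{x}_i - \tilde{x}_i^{res}\| = (1-\alpha)\|\tilde{x}_i^{res} - x_i\|$, and averaging over $i$ gives a mean displacement of $(1-\alpha)\Delta$. Using the Lipschitz continuity of $h$ (constant $L_h$) and of $\ell$ (constant $L_l$), plus the fact that $h_{\mathrm{FADRM}}$ is the minimizer of $\widehat{\mathcal{L}}_{\mathrm{FADRM}}$ (so it does no worse than $h_{\mathrm{res}}$ evaluated on the FADRM data), I get $\widehat{\mathcal{L}}_{\mathrm{FADRM}} \le \widehat{\mathcal{L}}_{\mathrm{res}} + L_l L_h (1-\alpha)\Delta$ up to the bookkeeping of how labels are matched. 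Second, I would control the Rademacher term: $\MyRad(\mathcal{H}\circ\tilde{\mathcal{C}}_{\mathrm{FADRM}})$ versus $\MyRad(\mathcal{H}\circ\tilde{\mathcal{C}}^{res})$. Here I would use a perturbation/contraction-style argument: since the FADRM points are a convex combination pulling $\tilde{x}_i^{res}$ toward $x_i$, the loss-composed function class evaluated on $\tilde{\mathcal{C}}_{\mathrm{FADRM}}$ can be related to a convex combination of the class on $\tilde{\mathcal{C}}^{res}$ and on $\mathcal{O}$; by convexity/sub-additivity of the empirical Rademacher average under such mixing, $\MyRad(\mathcal{H}\circ\tilde{\mathcal{C}}_{\mathrm{FADRM}}) \le \alpha\,\MyRad(\mathcal{H}\circ\tilde{\mathcal{C}}^{res}) + (1-\alpha)\,\MyRad(\mathcal{H}\circ\mathcal{O})$, possibly with an additive $L_h\Delta$-type slack term from the residual displacement.

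Combining the two estimates, the left-hand side minus the right-hand side is at most
\begin{equation}
L_l L_h (1-\alpha)\Delta + 2B\bigl[\alpha\,\MyRad(\mathcal{H}\circ\tilde{\mathcal{C}}^{res}) + (1-\alpha)\MyRad(\mathcal{H}\circ\mathcal{O}) + (\text{slack}) - \MyRad(\mathcal{H}\circ\tilde{\mathcal{C}}^{res})\bigr],
\end{equation}
which simplifies to an expression proportional to $(1-\alpha)\bigl[\MyRad(\mathcal{H}\circ\mathcal{O}) - \MyRad(\mathcal{H}\circ\tilde{\mathcal{C}}^{res})\bigr]$ plus terms linear in $L_h\Delta$, $L_l$, $B$, and $\alpha$. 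Rearranging, this quantity is negative exactly when $\MyRad(\mathcal{H}\circ\mathcal{O}) - \MyRad(\mathcal{H}\circ\tilde{\mathcal{C}}^{res})$ falls below the threshold $-\frac{L_h\Delta(L_l+2B\alpha)}{2B}$, which is precisely the hypothesis in Equation~\eqref{eq:residual-upper-bound}. So the final step is just algebraic rearrangement to match the stated threshold, tracking the $(1-\alpha)$ factor carefully (and absorbing it, since $1-\alpha \le 1$, if the paper's constant does not carry it).

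The main obstacle I anticipate is the Rademacher-complexity comparison step: bounding $\MyRad(\mathcal{H}\circ\tilde{\mathcal{C}}_{\mathrm{FADRM}})$ by a convex combination of the complexities on $\tilde{\mathcal{C}}^{res}$ and $\mathcal{O}$ is not automatic, because the empirical Rademacher average is a supremum over $\mathcal{H}$ of an inner product with a random sign vector, and the supremum of a sum is not the sum of suprema in the favorable direction. I would handle this either by (i) a Lipschitz/contraction argument (Talagrand-style) that directly bounds the change in $\MyRad$ by $L_h$ times the average displacement $(1-\alpha)\Delta$, giving $\MyRad(\mathcal{H}\circ\tilde{\mathcal{C}}_{\mathrm{FADRM}}) \le \MyRad(\mathcal{H}\circ\tilde{\mathcal{C}}^{res}) + L_h(1-\alpha)\Delta/\sqrt{n}$-type slack, or (ii) exploiting convexity of $x\mapsto \sup_{h\in\mathcal{H}} \langle \sigma, (h(x_i))_i\rangle$ jointly in the data points when $\mathcal{H}$ is a class of linear or otherwise convex-compatible functions. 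The cleaner of these, consistent with the Lipschitz assumptions already imposed in the statement, is route (i), and I expect the appendix proof follows that; the bookkeeping of how the $2B\alpha$ versus $L_l$ split arises in the threshold then pins down which displacement feeds into the risk term and which into the complexity term.
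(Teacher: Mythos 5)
Your proposal matches the paper's proof in all essentials: the empirical-risk gap is bounded by $L_l L_h(1-\alpha)\Delta$ exactly as in the paper's Lemma on empirical risk proximity, the Rademacher term is bounded by $\alpha\,\MyRad(\Hcal\circ\tilde{\mathcal{C}}^{res})+(1-\alpha)\MyRad(\Hcal\circ\mathcal{O})+L_h\alpha(1-\alpha)\Delta$ via the Lipschitz convex-combination bound (your route (i), which is what the appendix does), and the final algebra cancels the common $(1-\alpha)$ factor to recover the stated threshold. Your worry about the supremum step is unfounded in the direction needed here — $\sup_h\sum_i\sigma_i(\alpha h(\tilde{x}_i^{res})+(1-\alpha)h(x_i))\le\alpha\sup_h\sum_i\sigma_i h(\tilde{x}_i^{res})+(1-\alpha)\sup_h\sum_i\sigma_i h(x_i)$ is exactly the favorable sub-additivity — so the argument goes through as you outlined.
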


The insight of this theorem is that, when synthetic data is highly optimized and thus induces greater hypothesis complexity, combining it with the more structured and regular original data can lead to a tighter generalization bound, accounting for both empirical risk and Rademacher complexity.

\section{Experiments}
\label{sec:experiment}

\subsection{Datasets and Experimental Setup}

\noindent\textbf{Datasets.} 
We conduct experiments across datasets with varying resolutions, including CIFAR-100 (32$\times$32)~\citep{cifar10}, Tiny-ImageNet (64$\times$64)~\citep{le2015tiny}, ImageNet-1K (224$\times$224)~\citep{deng2009imagenet}, and their subsets.

\noindent\textbf{Baseline Methods.} 
To evaluate the effectiveness of our proposed framework, we conduct a comprehensive comparison against three state-of-the-art dataset distillation baselines. The first baseline is RDED~\cite{RDED_2024}, which selects cropped patches directly from the original dataset and is therefore categorized as involving full participation of the original data. The second method, EDC~\cite{shao2024elucidating}, retains a high degree of original data participation by optimizing selected patches with an extremely small learning rate, producing synthetic images that are close to the original samples. The third method, CV-DD~\cite{cui2025dataset}, aligns global BatchNorm statistics with sufficient optimization by updating initialization, resulting in minimal original data involvement despite initialization from real patches. These baselines exhibit varying degrees of original data involvement, providing a solid basis for evaluating \name{}.

\subsection{Main Results}

\noindent\textbf{Results Analysis.} As shown in Table~\ref{main_table}, our framework consistently achieves state-of-the-art performance across various settings. For instance, on ImageNet-1K with IPC=10 and ResNet-101 as the student model, the ensemble-enhanced variant \name{}+ attains an accuracy of 58.1\%, outperforming EDC and CV-DD by a substantial margin of +6.4\%. Notably, RDED underperforms \name{}, underscoring the limitations of relying solely on the original dataset without further optimization. Furthermore, CV-DD is inferior to \name{}+, highlighting the drawbacks of largely excluding original data during synthesis. Lastly, the consistent outperformance of \name{}+ over EDC validates the efficacy of our framework in harnessing original data via data-level residual connections.

\begin{table*}
    \centering
    \renewcommand{\arraystretch}{1}
    \resizebox{1.0\textwidth}{!}{
    \huge
    \begin{tabular}{cc|ccc>{\color{gray}}cc|ccc>{\color{gray}}cc|ccc>{\color{gray}}cc}
    \toprule[2.5pt] 
    & & \multicolumn{5}{c|}{ResNet18} & \multicolumn{5}{c|}{ResNet50} & \multicolumn{5}{c}{ResNet101} \\
    \cmidrule(lr){3-7} 
    \cmidrule(lr){8-12} 
    \cmidrule(lr){13-17} 
    Dataset & IPC (Ratio) & RDED & EDC & CV-DD & \name{} & \name{}+ & RDED & EDC & CV-DD & \name{} & \name{}+ & RDED & EDC & CV-DD & \name{} & \name{}+ \\
    \midrule
    \multirow{4}{*}{CIFAR-100}
    & 1 (0.2\%) & 17.1 & 39.7 & 28.3 & 31.8 & \textbf{40.6}&10.9 & 36.1 & 28.7 & 27.3 & \textbf{37.4} & 11.2 & 32.3 & 29.0 & 29.2 &\textbf{40.1}\\
    & 10 (2.0\%) & 56.9 & 63.7 & 62.7 & 67.4 & \textbf{67.9} & 41.6 & 62.1 & 61.5 & 66.5 & \textbf{67.4}& 54.1 & 61.7 & 63.8 & 68.3 & \textbf{68.9} \\
    & 50 (10.0\%) & 66.8 & 68.6 & 67.1 & 71.0 & \textbf{71.3} & 64.0 & 69.4 & 68.2 & 71.5 & \textbf{72.1} & 67.9 & 68.5 & 67.6 & 71.9& \textbf{72.1} \\
    & Whole Dataset & \multicolumn{5}{c|}{\cellcolor[gray]{0.9}\ \ \ 78.9} & \multicolumn{5}{c|} {\cellcolor[gray]{0.9}\ \ \ 79.9} & \multicolumn{5}{c} {\cellcolor[gray]{0.9}\ \ \ 79.5} \\
    \midrule
    \multirow{4}{*}{Tiny-ImageNet}
    & 1 (0.2\%) & 11.8 & 39.2 & 30.6 & 28.6 & \textbf{40.4}& 8.2 & 35.9 & 25.1 & 28.4 &\textbf{39.4} & 9.6 & 40.6 & 28.0 & 27.9& \textbf{41.9} \\
    & 10 (2.0\%) & 41.9 & 51.2 & 47.8 & 48.9& \textbf{52.8}& 38.4 & 50.2 & 43.8 & 47.3 & \textbf{53.7}& 22.9 & 51.6 & 47.4 & 47.8 &\textbf{53.6} \\
    & 50 (10.0\%) & 58.2 & 57.2 & 54.1 & 56.4 & \textbf{58.7}& 45.6 & 58.8 & 54.7 & 57.0 & \textbf{60.3}& 41.2 & 58.6 & 54.1 & 57.2 &\textbf{60.8}\\
    & Whole Dataset & \multicolumn{5}{c|} {\cellcolor[gray]{0.9}\ \ \ 68.9} & \multicolumn{5}{c|} {\cellcolor[gray]{0.9}\ \ \ 71.5} & \multicolumn{5}{c} {\cellcolor[gray]{0.9}\ \ \ 70.6} \\
    \midrule
    \multirow{4}{*}{ImageNette}
    & 1 (0.1\%) & 35.8 & - & 36.2 & 36.2 &\textbf{39.2}& 27.0 & - & 27.6 & 31.1 &\textbf{31.9}& 25.1 & - & 25.3 &  26.3&\textbf{29.3}\\
    & 10 (1.0\%) & 61.4 & - & 64.1 & 64.8 & \textbf{69.0} & 55.0 & - & 61.4 & 64.1& \textbf{68.1}& 54.0 & - & 61.0 & 61.9 &\textbf{63.7}\\
    & 50 (5.2\%) & 80.4 & - & 81.6 & 83.6 & \textbf{84.6}& 81.8 & - & 82.0 & 84.1 & \textbf{85.4} & 75.0 & - & 80.0 & 80.3 &\textbf{82.3} \\
    & Whole Dataset & \multicolumn{5}{c|} {\cellcolor[gray]{0.9}\ \ \ 93.8} & \multicolumn{5}{c|} {\cellcolor[gray]{0.9}\ \ \ 89.8} & \multicolumn{5}{c} {\cellcolor[gray]{0.9}\ \ \ 89.3} \\
    \midrule
    \multirow{4}{*}{ImageWoof}
    & 1 (0.1\%) & 20.8 & - & 21.4 & 21.0 &\textbf{22.8}& 17.8 & - & 19.1  &19.5  &\textbf{19.9}& 19.6 & - &  19.9& 20.0 &\textbf{21.8}\\
    & 10 (1.1\%)& 38.5 & - &49.3 & 44.5 & \textbf{57.3} & 35.2 & - & 47.8 & 44.9& \textbf{54.1}& 31.3 & - &42.6 & 40.4&\textbf{51.4}\\
    & 50 (5.3\%) & 68.5 & - & 71.9 & 72.3& \textbf{72.6}& 67.0& - & 71.2 & 71.0 & \textbf{71.7} & 59.1 & - & 69.9&70.3  &\textbf{70.6} \\
    & Whole Dataset & \multicolumn{5}{c|} {\cellcolor[gray]{0.9}\ \ \ 88.2} & \multicolumn{5}{c|} {\cellcolor[gray]{0.9}\ \ \ 77.8} & \multicolumn{5}{c} {\cellcolor[gray]{0.9}\ \ \ 82.7} \\
    \midrule
    \multirow{4}{*}{ImageNet-1k}
    & 1  (0.1\%) & 6.6 & 12.8 & 9.2 & 9.0 & \textbf{14.7} & 8.0 & 13.3 & 10.0 & 12.2 & \textbf{16.2}& 5.9 & 12.2 & 7.0 &  6.8 &\textbf{14.1} \\
    & 10 (0.8\%) & 42.0 & 48.6 & 46.0 & 48.4 & \textbf{50.9} & 49.7 & 54.1 & 51.3 & 54.5& \textbf{57.5} & 48.3 & 51.7 & 51.7 & 54.8 & \textbf{58.1}\\
    & 50 (3.9\%) & 56.5 & 58.0 & 59.5 & 60.1 & \textbf{61.2} & 62.0 & 64.3 & 63.9 & 65.4 & \textbf{66.9}& 61.2 & 64.9 & 62.7 & 66.0 & \textbf{67.0}\\
    & Whole Dataset & \multicolumn{5}{c|} {\cellcolor[gray]{0.9}\ \ \ 72.3} & \multicolumn{5}{c|} {\cellcolor[gray]{0.9}\ \ \ 78.6} & \multicolumn{5}{c} {\cellcolor[gray]{0.9}\ \ \ 79.8} \\
    \bottomrule[2.5pt] 
    \end{tabular}
    }
\caption{\textbf{Post-evaluation performance comparison with SOTA baseline methods.} All experiments follow the training settings established in EDC~\cite{shao2024elucidating}: 300 epochs for Tiny-ImageNet (IPC=10, 50), ImageNet-1K, and its subsets, and 1,000 epochs for CIFAR-100, Tiny-ImageNet (IPC=1). For fair comparison with single-model distillation (RDED) and ensemble-based methods (CV-DD, EDC), we evaluate both the single-model version (\name{} only utilized ResNet18~\cite{he2016deep} for distillation) and the ensemble-enhanced version (\name{}+). This evaluation strategy ensures equitable benchmarking while maintaining methodological consistency across all experiments.}
\label{main_table}
\end{table*}

\begin{table}[ht]
    \centering
    \begin{subtable}{0.44\linewidth}
        \centering
        \scriptsize
        \renewcommand{\arraystretch}{1.4}
        \resizebox{\linewidth}{!}{
        \begin{tabular}{lcc}
            \toprule
             Method & Time Cost (s) & Peak Memory (GB) \\
            \midrule
             SRe$^2$L++~\cite{cui2025dataset} & 2.52 & 5.3 \\
            \cellcolor[gray]{0.9}\name{} &\cellcolor[gray]{0.9} \textbf{0.47} & \cellcolor[gray]{0.9} \textbf{2.9} \\
            \midrule
             G-VBSM~\cite{GBVSM_2024} & 17.28 & 21.4 \\
             CV-DD~\cite{cui2025dataset} & 8.20 & 23.4 \\
             EDC~\cite{shao2024elucidating} & 4.99 & 17.9 \\
           \cellcolor[gray]{0.9} \name{}+ &\cellcolor[gray]{0.9} \textbf{1.09} & \cellcolor[gray]{0.9} \textbf{11.0} \\
            \bottomrule
        \end{tabular}
        }
    \end{subtable}
    \hfill
    \begin{subtable}{0.55\linewidth}
        \centering
        \renewcommand{\arraystretch}{1}
        \resizebox{\linewidth}{!}{  
        \begin{tabular}{lccccc}
        \toprule
        Model & \#Params & RDED & EDC & CV-DD & \name{}+ \\
        \midrule
        ResNet18~\cite{he2016deep} & 11.7M & 42.0 & 48.6 & 46.0 & \textbf{50.0} \\
        ResNet50~\cite{he2016deep} & 25.6M & 49.7 & 54.1 & 51.3 & \textbf{57.5} \\
        ResNet101~\cite{he2016deep} & 44.5M & 48.3 & 51.7 & 51.7 & \textbf{58.1} \\
        EfficientNet-B0~\cite{efficientnetv2} & 39.6M & 42.8 & 51.1 & 43.2 & \textbf{51.9} \\
        MobileNetV2~\cite{mobilenetv2} & 3.4M & 34.4 & 45.0 & 39.0 & \textbf{45.5} \\
        ShuffleNetV2-0.5x~\cite{zhang2018shufflenet} & 1.4M & 19.6 & 29.8 & 27.4 & \textbf{30.2} \\
        Swin-Tiny~\cite{liu2021swin} & 28.0M & 29.2 & 38.3 & -- & \textbf{39.1} \\
        Wide ResNet50-2~\cite{he2016deep} & 68.9M & 50.0 & -- & 53.9 & \textbf{59.1} \\
        DenseNet121~\cite{huang2017densely} & 8.0M & 49.4 & -- & 50.9 & \textbf{55.4} \\
        DenseNet169~\cite{huang2017densely} & 14.2M & 50.9 & -- & 53.6 & \textbf{58.5} \\
        DenseNet201~\cite{huang2017densely} & 20.0M & 49.0 & -- & 54.8 & \textbf{59.7} \\
        \bottomrule
        \end{tabular}
        }
    \end{subtable}
    \vspace{1em}
    \caption{\textbf{Left:} Efficiency comparison between various optimization-based methods and our approach when distilling ImageNet-1k. The time cost is measured in seconds, representing the duration required to generate a single image on a single RTX 4090 GPU. \textbf{Right:} Top-1 accuracy (\%) on ImageNet-1K for cross-architecture generalization with IPC=10.
    }
    \vspace{-.2in}
    \label{tab:combined_tab_efficiency_cross}
\end{table}

\noindent\textbf{Efficiency Comparison.} Table~\ref{tab:combined_tab_efficiency_cross} (Left) highlights the superior efficiency of our framework compared to existing Uni-level frameworks. Bi-level frameworks are excluded from this comparison due to their inherent limitations in scalability for large-scale datasets. Specifically, \name{}+ achieves a reduction of 3.9 seconds per image in optimization time compared to EDC~\cite{shao2024elucidating}, culminating in a total computational saving of 54 hours when applied to the 50 IPC ImageNet-1K dataset. Similarly, \name{} demonstrates a 28.5 hours reduction in training time relative to SRe$^2$L++ for the same task. Additionally, our framework significantly reduces peak memory usage compared to other frameworks, enabling efficient dataset distillation even in resource-constrained scenarios. These results underscore the scalability and computational efficiency of our approach, which not only accelerates large-scale dataset distillation but also substantially lowers associated computational costs.

\subsection{Cross-Architecture Generalization}
A fundamental criterion for evaluating the quality of distilled data is its ability to generalize across diverse network architectures, which significantly enhances its practical utility and adaptability in real-world applications. As illustrated in Table~\ref{tab:combined_tab_efficiency_cross} (Right), \name{}+ consistently outperforms all existing state-of-the-art methods across models of varying sizes and complexities, demonstrating superior generalization capabilities and robustness in diverse scenarios.

\subsection{Ablation Study}
\label{sec:ablation}

\noindent
\begin{wraptable}{r}{0.45\textwidth}
    \vspace{-.2in}
    \centering  
    \resizebox{\linewidth}{!}{
    \begin{tabular}{l|cc|cc}
        \toprule
                             & \multicolumn{2}{c|}{IPC=10}   & \multicolumn{2}{c}{IPC=50} \\ 
                             & \name{} & \name{}+  & \name{} & \name{}+  \\ 
        \midrule
        1 $\times$ 1 &  \textbf{48.4}  &   \textbf{50.9}    &  \textbf{60.1}   & \textbf{61.2} \\ 
        2 $\times$ 2 & 47.7 & 50.0& 59.8     &60.1   \\ 
        \bottomrule
    \end{tabular}
    }
    \vspace{-.04in}
    \caption{Comparison of student model (ResNet-18) generalization performance when trained on distilled datasets generated using $1 \times 1$ and $2 \times 2$ patch configurations during initialization and residual injection.}
    \vspace{-.3in}
    \label{tab:patch-num}
\end{wraptable}
\noindent\textbf{Impact of Patch Numbers for Initialization and Residuals.}
To assess the effect of different patch configurations during both the initialization and residual injection stages, we conduct an ablation study, as shown in Table~\ref{tab:patch-num}.
The results suggest that both $1 \times 1$ and $2 \times 2$ patch settings are effective for generating distilled data.
However, the $1 \times 1$ configuration consistently delivers the better overall performance, making it the preferred choice in practice.

\begin{table}[b]
\centering
\scriptsize
\renewcommand{\arraystretch}{1}
\resizebox{0.95\linewidth}{!}{  

    \begin{tabular}{l|cc|cc|cc|cc}
    \toprule
                         & \multicolumn{2}{c|}{\name{}}   & \multicolumn{2}{c|}{\name{}+}& \multicolumn{2}{c|}{SRe$^2$L++} & \multicolumn{2}{c}{G-VBSM} \\ 
                         & W/ MPT & W/O MPT  & W/ MPT & W/O MPT & W/ MPT & W/O MPT & W/ MPT & W/O MPT \\ 
    \midrule
    ResNet-18 (Student) & 47.7 \%    & 47.8 \%       & 50.0 \%    & 49.6 \%  & 43.1 \% & 43.1 \%  & 30.5 \% & 30.7 \%  \\ 
    Efficiency      & 0.26 ms    & 0.63  ms      & 0.58  ms     &  0.96 ms  & 0.26 ms  & 0.63 ms &2.65 ms & 4.32 ms \\ 
    Peak GPU Memory & 2.9 GB       & 5.3 GB        &  12 GB     &  23 GB  & 2.9 GB & 5.3 GB & 11.8 GB & 21.4 GB   \\ 
    \bottomrule
    \end{tabular}
}
\vspace{1em}
\caption{Comparison of model generalization performance, optimization efficiency (milliseconds per image per iteration, measured under 100 batch size and 224 as input size), and peak GPU memory usage with and without mixed precision training under ImageNet-1K IPC=10.}
\label{tab:mpt_comparison}
\vspace{-.2in}
\end{table}

\begin{table}[t]
    \centering
    \begin{subtable}{0.49\linewidth}
        \centering
        \renewcommand{\arraystretch}{1.2}
        \resizebox{\linewidth}{!}{  
        \begin{tabular}{lcc}
        \toprule
        Configuration         & Accuracy (\%) & Time Cost (s) \\ 
        \midrule
        \name{} (W/O \emph{ARC} + W/O \emph{MRO})    & 46.4          & 0.52       \\ 
        \name{} (W/O \emph{ARC} + W/ \emph{MRO})     & 46.2          & 0.47      \\ 
        \midrule
        \name{} (W/ \emph{ARC} ($\alpha=0.9$) + W/ \emph{MRO}) & 45.7          & 0.47       \\ 
        \name{} (W/ \emph{ARC} ($\alpha=0.8$) + W/ \emph{MRO}) & 46.4          & 0.47       \\ 
        \name{} (W/ \emph{ARC} ($\alpha=0.7$) + W/ \emph{MRO}) & 47.6     & 0.47       \\ 
        \name{} (W/ \emph{ARC} ($\alpha=0.6$) + W/ \emph{MRO}) & 47.3    & 0.47       \\ 
        \name{} (W/ \emph{ARC} ($\alpha=0.5$) + W/ \emph{MRO}) & \textbf{47.7}     & 0.47       \\ 
        \name{} (W/ \emph{ARC} ($\alpha=0.4$) + W/ \emph{MRO}) &   47.4       & 0.47       \\ 
        \bottomrule
        \end{tabular}
        }
    \end{subtable}
    \hfill
    \begin{subtable}{0.49\linewidth}
        \centering
        \renewcommand{\arraystretch}{1.24}
        \resizebox{\linewidth}{!}{  
        \begin{tabular}{lcc}
        \toprule
        Configuration         & Accuracy (\%) & Time Cost (s) \\ 
        \midrule
        \textbf{\name{}+} (W/O \emph{ARC} + W/O \emph{MRO})&  48.7 &     1.16 \\ 
        \textbf{\name{}+} (W/O \emph{ARC} + W/ \emph{MRO})&  48.2  &      1.09 \\ 
        \midrule
        \textbf{\name{}+} (W/ \emph{ARC} ($\alpha=0.9$) + W/ \emph{MRO})) &   48.5    &  1.09   \\ 
        \textbf{\name{}+} (W/ \emph{ARC} ($\alpha=0.8$) + W/ \emph{MRO}) &     48.0  &   1.09   \\ 
        \textbf{\name{}+} (W/ \emph{ARC} ($\alpha=0.7$) + W/ \emph{MRO}) &     48.9  &   1.09  \\ 
        \textbf{\name{}+} (W/ \emph{ARC} ($\alpha=0.6$) + W/ \emph{MRO}) &   49.3    &    1.09  \\ 
        \textbf{\name{}+} (W/ \emph{ARC} ($\alpha=0.5$) + W/ \emph{MRO}) &   \textbf{50.0}    &  1.09   \\ 
        \textbf{\name{}+} (W/ \emph{ARC} ($\alpha=0.4$) + W/ \emph{MRO}) &    49.5   &   1.09   \\ 
        \bottomrule
        \end{tabular}
        }
    \end{subtable}
    \vspace{1em}
\caption{Performance comparison of ResNet-18 as the student model trained on distilled ImageNet-1K (IPC=10) datasets generated with different merge ratios ($\alpha$), fixed $D_\text{ds}$=200 and \emph{k}=3. The efficiency is measured in seconds per image generation. \textbf{Left} presents the ablation results for single-model distillation, while \textbf{Right} shows the corresponding results for multi-model distillation.}
\label{tab:fadrm_results}
\vspace{-.05in}
\end{table}

\begin{table}[t]
    \centering
    \vspace{-0.15in}
    \scriptsize
    \begin{subtable}{0.59\linewidth}
        \centering
        \renewcommand{\arraystretch}{1}
        \begin{tabular}{c|cccccc}
            \toprule
            $k$ & 1 & 2 & 3 & 4 & 5 & 6 \\ 
            \midrule
            ImageNet-1k & 47.1 & 47.6 & \textbf{47.8} & 47.6 & 47.4 & 47.3 \\ 
            CIFAR-100   & 59.2 & 60.9 & \textbf{61.5} & 60.5 & 59.4 & 57.9 \\
            \bottomrule
        \end{tabular}
    \end{subtable}
    \hfill
    \begin{subtable}{0.4\linewidth}
        \centering
        \renewcommand{\arraystretch}{1}
        \begin{tabular}{c|cccc}
            \toprule
            $D_\text{ds}$ & 160 & 180 & 200 & 224 \\ 
            \midrule
            Post Eval (\%) & 47.2 & 47.5 & 47.7 & 47.7 \\ 
            Time Cost (s)  & 0.42 & 0.44 & 0.47 & 0.52 \\ 
            \bottomrule
        \end{tabular}
    \end{subtable}
    \vspace{1em}
    \caption{\textbf{Left} presents the ablation results for $k$ (frequency of residual connections) using \name{} with $D_\text{ds}$=200, $\alpha$=0.5, while \textbf{Right} shows the ablation results for $D_\text{ds}$ on ImageNet-1k IPC=10. Efficiency is measured as the total time required to optimize a single image under a fixed budget of 2,000 optimization iterations using \name{} with $\alpha$=0.5, $k$=3.}
\label{tab:combined_ablation_k_alpha}
    \vspace{-.2in}
\end{table}

\noindent\textbf{Impact of Mixed Precision Training (MPT).} Our ablation study in Table~\ref{tab:mpt_comparison} shows that MPT preserves distilled dataset quality while significantly reducing peak memory usage and improving optimization efficiency, making it an effective strategy for accelerating distillation.

\noindent\textbf{Impact of Components in \name{}.} To evaluate the impact of individual components (\emph{MRO} and \emph{ARC}) in our framework, we conduct a comprehensive ablation study. As demonstrated in Table~\ref{tab:fadrm_results}, the results reveal that the initial integration of \emph{MRO} results in a performance decline compared to the baseline (W/O \emph{ARC} and W/O \emph{MRO}). This decline is primarily attributed to the loss of crucial details during the resampling process. However, by incorporating \emph{ARC} and reducing the merge ratio $\alpha$ (thereby assigning higher priority to the original patches during merging), the performance is significantly enhanced compared to the baseline, which struggles with the issue of information vanishing. The optimal performance is achieved with a merge ratio of 0.5 for both \name{} and \name{}+, indicating that an equal combination of the original patch and the intermediate optimized input produces the most favorable outcomes. Crucially, the results confirm that \emph{ARC} effectively mitigates information vanishing and compensates for missing details during the resampling process, thereby facilitating the deployment of a fast yet highly effective framework.

\begin{wrapfigure}{r}{0.5\textwidth}
    \centering
    \vspace{-0.19in}
\includegraphics[width=0.99\linewidth]{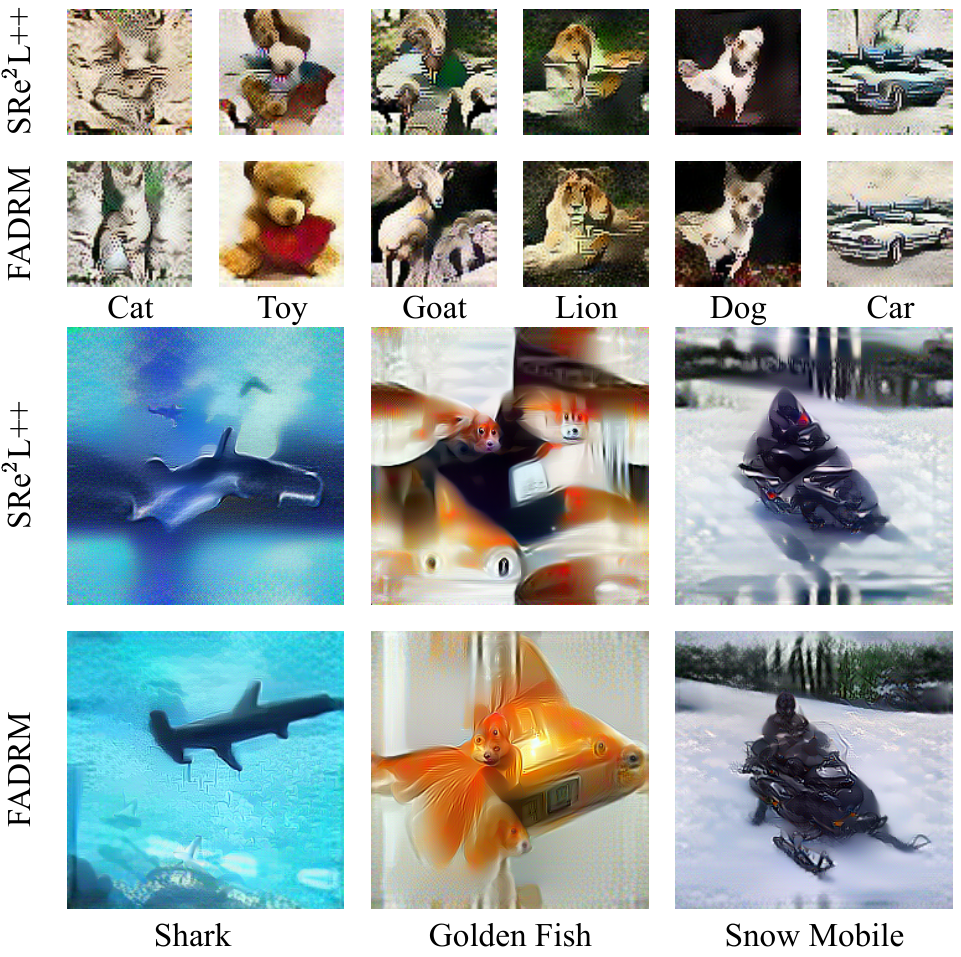}
    \caption{Visualization of dataset distilled by \name{} and SRe$^2$L++ on Tiny-ImageNet (top two rows) and ImageNet-1k (bottom two rows).
    }
    \vspace{-0.2in}
    \label{fig:DD_visualization}
\end{wrapfigure}
\noindent\textbf{Impact of Downsampled Input Size in \emph{MRO}.} To determine the optimal downsampled input size ($D_\text{ds}$) for \emph{MRO}, we conduct an ablation study, as presented in Table~\ref{tab:combined_ablation_k_alpha} (Right). Our results demonstrate that $D_\text{ds}$=200 achieves the most optimal performance. Notably, using other sizes leads to a degradation in the quality of the distilled dataset compared to optimizing with the original input size of 224.

\noindent\textbf{Impact of varying $k$.} To investigate the impact of $k$ on the quality of the distilled dataset, we conduct an ablation study as presented in Table~\ref{tab:combined_ablation_k_alpha} (Left). The results demonstrate that $k=3$ yields the optimal configuration. Notably, we observe a positive correlation between the $k$ and post-evaluation performance when increasing $k$ from one to three. However, beyond $k=3$, performance decreases as $k$ increases, indicating that excessive residual connections can introduce redundant local details over global structures, which can lead to suboptimal results.

\subsection{Distilled Image Visualization}

Fig.~\ref{fig:DD_visualization} compares distilled data from \name{} and SRe$^2$L++~\cite{cui2025dataset}, both using ResNet-18 with identical initial patch images, differing only in \name{}'s incorporation of residual connections. As shown, \name{} effectively preserves the critical features of the original patches and retains significantly more details than SRe$^2$L++. This highlights the advantage of residual connections in enhancing information density and improving the quality of distilled data. 

\subsection{Application: Continual Learning}
Leveraging continual learning to verify the effectiveness of distilled dataset generalization has been widely used in prior work~\cite{yin2024squeeze,GBVSM_2024,DBLP:conf/wacv/ZhaoB23}. Following these protocols and utilizing the class-incremental learning framework as in DM~\cite{DBLP:conf/wacv/ZhaoB23}, we conduct an evaluation on Tiny-ImageNet IPC=50 using a 5-step and 10-step incremental, as shown in Fig.~\ref{fig:cl}. The results indicate that \name{} consistently surpasses RDED, demonstrating its effectiveness. 

\begin{figure}[htbp]
    \centering
    \begin{subfigure}[b]{0.45\textwidth}
        \includegraphics[width=\textwidth]{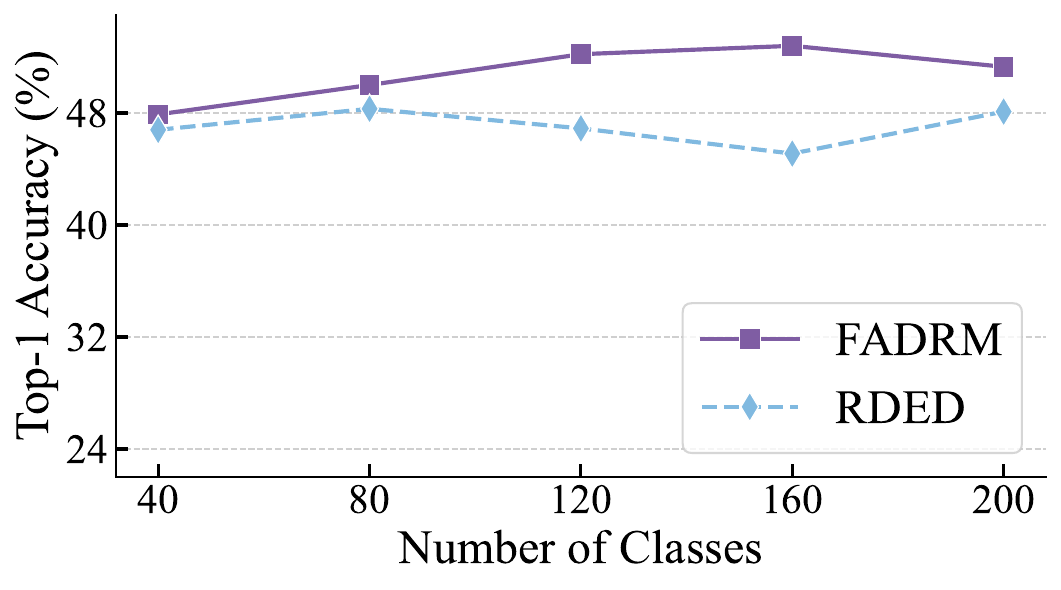}
    \end{subfigure}
    \hfill
    \begin{subfigure}[b]{0.45\textwidth}
        \includegraphics[width=\textwidth]{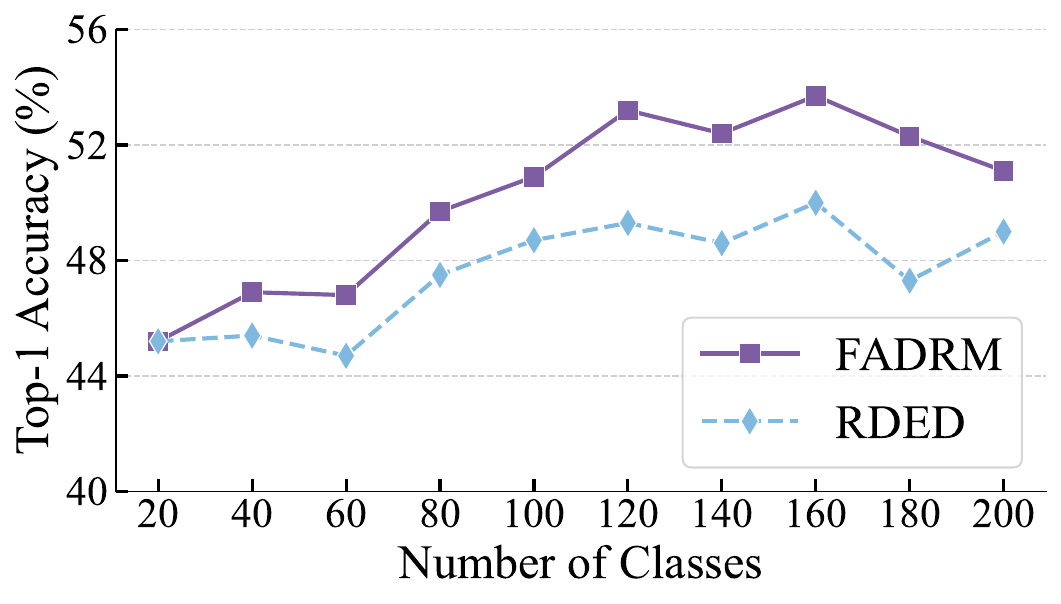}
    \end{subfigure}
    \caption{Five-step and Ten-step class-incremental learning on Tiny-ImageNet with IPC=50.}
    \label{fig:cl}
\end{figure}
\section{Conclusion}
\label{sec:conclusion} 
We proposed \name{}, a novel framework for dataset distillation designed to generate high-quality distilled datasets with significantly reduced computational overhead. Our work identifies and addresses the critical challenge of vanishing information, a fundamental limitation in {\em Uni-Level Framework} that heavily undermines the information density of distilled datasets. To address this, we introduce {\em data-level residual connections}, a novel mechanism that balances the operations of preserving critical original features and integrating new information, enriching the distilled dataset with both original and new condensed features and increasing its overall information density. Furthermore, by integrating parameter mixed precision training and input multi-resolution optimization, our framework achieves significant reductions in both Peak GPU memory consumption and training time. Extensive experiments demonstrate that \name{} outperforms existing state-of-the-art methods in both efficiency and accuracy across multiple benchmark datasets. For future work, we aim to extend the idea of {\em data-level residual connections} to broader modalities and applications of dataset distillation tasks.

\bibliographystyle{plainnat}
\bibliography{reference}

\clearpage
\appendix
\newpage

\begin{center}
	\LARGE \bf {Appendix of FADRM}
\end{center}
\etocdepthtag.toc{mtappendix}
\etocsettagdepth{mtchapter}{none}
\etocsettagdepth{mtappendix}{section}
\tableofcontents
\newpage

\section{Theoretical Derivation} 
\label{sec:Theoretical Derivation}

\subsection{Preliminary}

\begin{lemma}[Data Processing Inequality~\cite{beaudry2012intuitiveproofdataprocessing}]
\label{lemma:data-processing}
Let $X \to Y \to Z$ form a Markov chain. Then the mutual information between $X$ and $Z$ is upper bounded by that between $X$ and $Y$:
\begin{equation}
    I(X; Z) \le I(X; Y).
\end{equation}

In particular, no post-processing of $Y$ can increase the information that $Y$ contains about $X$.
\end{lemma}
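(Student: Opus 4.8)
The plan is to prove the Data Processing Inequality using the chain rule for mutual information together with the non-negativity of conditional mutual information, exploiting the Markov structure $X \to Y \to Z$. First I would consider the joint mutual information $I(X;(Y,Z))$ and expand it in the two possible orders via the chain rule:
\begin{equation}
I\bigl(X;(Y,Z)\bigr) = I(X;Y) + I(X; Z \mid Y) = I(X;Z) + I(X; Y \mid Z).
\end{equation}

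Next I would invoke the defining property of the Markov chain: conditioned on $Y$, the variables $X$ and $Z$ are independent, hence $I(X; Z \mid Y) = 0$. Substituting this into the identity above yields $I(X;Y) = I(X;Z) + I(X; Y \mid Z)$. Since conditional mutual information is always non-negative — this being the one genuinely analytic ingredient, following from Jensen's inequality applied to the concave function $\log$, or equivalently from the non-negativity of conditional KL divergence — we obtain $I(X;Y) \ge I(X;Z)$, which is the claim. The closing remark of the lemma (that no post-processing of $Y$ can increase the information $Y$ carries about $X$) is then just the special case in which $Z$ is obtained from $Y$ through a (possibly randomized) channel independent of $X$, a situation that automatically makes $X \to Y \to Z$ a Markov chain.

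The main obstacle, if one insists on a fully self-contained argument, lies not in the three-line assembly above but in the two supporting facts: (i) the chain rule $I(X;(Y,Z)) = I(X;Y) + I(X;Z\mid Y)$, which unwinds directly from $I(A;B) = H(A) - H(A\mid B)$ and the entropy chain rule $H(Y,Z\mid X) = H(Y\mid X) + H(Z\mid X,Y)$; and (ii) $I(X;Z\mid Y)\ge 0$, i.e. nonnegativity of (conditional) mutual information. Neither is deep, and an alternative route avoiding (i) is to write $I(X;Z) = D\!\left(p_{XZ}\,\|\,p_X p_Z\right)$ and argue that pushing $p_{XZ}$ through the Markov kernel $p_{Z\mid Y}$ cannot increase KL divergence; but the chain-rule proof is cleanest. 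In the present paper it suffices to cite the standard reference, so I would keep the exposition to the short computation and relegate (i)--(ii) to a remark.
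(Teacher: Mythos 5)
Your argument is the standard and correct proof of the Data Processing Inequality: expanding $I(X;(Y,Z))$ by the chain rule in both orders, using $I(X;Z\mid Y)=0$ from the Markov property, and concluding via non-negativity of conditional mutual information. The paper itself states this lemma without proof, citing it as a known result, so there is nothing to compare against; your derivation is exactly the canonical textbook argument and is complete.
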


\begin{theorem}[Temperature-scaled KL divergence is bounded and Lipschitz‐continuous]
\label{thm:kl_bounded_full}
Fix integers $k\ge 2$ and a constant $C>0$.  
For any temperature $T>0$ let  
\begin{equation}
z=(z_1,\dots ,z_k)\in[-C,C]^k,
\qquad
q^{(T)}_i
=\frac{\exp\!\bigl(z_i/T\bigr)}{\displaystyle\sum_{j=1}^k \exp\!\bigl(z_j/T\bigr)}
\quad(i=1,\dots ,k).
\end{equation}
Let $p=(p_1,\dots ,p_k)\in\Delta_k$ be an arbitrary target probability vector
(e.g.\ it may come from another soft-max with its own temperature).
Define the loss
\begin{equation}
\ell\bigl(p,q^{(T)}(z)\bigr)
:=\KL\!\bigl(p\;\|\;q^{(T)}(z)\bigr)
=\sum_{i=1}^k p_i\log\frac{p_i}{q^{(T)}_i}.
\end{equation}
Then the following hold:
\begin{enumerate}
\item (\emph{Bounded range}) For every admissible pair $(p,z)$,
\begin{equation}
      0\;\le\;
      \ell\bigl(p,q^{(T)}(z)\bigr)
      \;\le\;
      B,
      \qquad
      B:=\log k+\frac{2C}{T}.
\end{equation}
\item (\emph{$\ell_\infty$-Lipschitz continuity in logits})  
      The map $z\mapsto\ell\bigl(p,q^{(T)}(z)\bigr)$ is
      $L$-Lipschitz w.r.t.\ the $\ell_\infty$ norm with
      $L=\frac{1}{T}$. Consequently, it is $\sqrt{k}/T$-Lipschitz w.r.t.\ the Euclidean norm.
\end{enumerate}
\end{theorem}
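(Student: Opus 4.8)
The plan is to establish the two claims separately, handling the bounded-range assertion first since it is the easier of the two and supplies the constant $B$ that will be needed implicitly for interpreting the statement. For the lower bound $\ell(p, q^{(T)}(z)) = \KL(p \,\|\, q^{(T)}(z)) \ge 0$, I would simply invoke Gibbs' inequality (nonnegativity of KL divergence), which holds for any pair of probability vectors on the same support; note $q^{(T)}_i > 0$ for all $i$ since it is a soft-max output, so the KL is always finite and well-defined. For the upper bound, I would expand
\begin{equation}
\KL(p \,\|\, q^{(T)}(z)) = \sum_i p_i \log p_i - \sum_i p_i \log q^{(T)}_i = -H(p) - \sum_i p_i \log q^{(T)}_i,
\end{equation}
then bound each term: $-H(p) \le 0$, and $-\log q^{(T)}_i = \log\!\bigl(\sum_j \exp((z_j - z_i)/T)\bigr) \le \log\!\bigl(k \exp(2C/T)\bigr) = \log k + 2C/T$ using $|z_j - z_i| \le 2C$ on the cube $[-C,C]^k$. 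Since $\sum_i p_i = 1$, this yields $\KL(p \,\|\, q^{(T)}(z)) \le \log k + 2C/T = B$.

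For the Lipschitz claim, the approach is to compute the gradient of $z \mapsto \ell(p, q^{(T)}(z))$ explicitly and bound its $\ell_1$ norm (the dual of $\ell_\infty$). Writing $\ell = -H(p) - \sum_i p_i \log q^{(T)}_i$ and using the standard soft-max derivative $\partial (\log q^{(T)}_i)/\partial z_m = \tfrac{1}{T}(\mathbf{1}[i = m] - q^{(T)}_m)$, I get
\begin{equation}
\frac{\partial \ell}{\partial z_m} = -\frac{1}{T}\sum_i p_i\bigl(\mathbf{1}[i=m] - q^{(T)}_m\bigr) = -\frac{1}{T}\bigl(p_m - q^{(T)}_m\bigr),
\end{equation}
since $\sum_i p_i = 1$. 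Hence $\|\nabla_z \ell\|_1 = \tfrac{1}{T}\sum_m |p_m - q^{(T)}_m| = \tfrac{1}{T}\|p - q^{(T)}\|_1 \le \tfrac{2}{T}$ in general, but this crude bound gives $L = 2/T$, not $1/T$. To get the sharp constant $1/T$, I would instead observe that $p - q^{(T)}$ is a difference of two probability vectors, so $\sum_m (p_m - q^{(T)}_m) = 0$; splitting the index set into $\{m : p_m \ge q^{(T)}_m\}$ and its complement, the positive and negative parts of the coordinates each sum to the same value $\delta \le 1$, whence $\|p - q^{(T)}\|_1 = 2\delta$... which still only gives $2/T$. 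The resolution is that the correct dual pairing for an $\ell_\infty$-Lipschitz bound is $|\ell(z) - \ell(z')| \le \sup_z \|\nabla \ell(z)\|_1 \cdot \|z - z'\|_\infty$, so I should double-check whether the paper's claimed $L = 1/T$ uses a tighter structural fact — likely that only the vector $q^{(T)}$ depends on $z$ and moving $z$ along the all-ones direction leaves $\ell$ invariant, effectively reducing the active dimension; I would argue on the quotient space $\mathbb{R}^k / \mathbb{R}\mathbf{1}$ where the relevant norm computation tightens the constant. Once $L = 1/T$ is secured w.r.t.\ $\ell_\infty$, the Euclidean version follows from $\|z\|_\infty \le \|z\|_2$, giving an $\ell_2$-Lipschitz constant of at most $\sqrt{k}/T$ via the standard norm-equivalence $\|v\|_1 \le \sqrt{k}\,\|v\|_2$ applied to the gradient.

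The main obstacle I anticipate is precisely pinning down the sharp Lipschitz constant $L = 1/T$ rather than the easy $L = 2/T$: it requires being careful about which norm is placed on the logit space and exploiting the shift-invariance $\ell(z + c\mathbf{1}) = \ell(z)$ of the soft-max to argue on the appropriate quotient. If the quotient argument does not recover exactly $1/T$, the fallback is that $L = 2/T$ is still sufficient for all downstream uses (the constant only enters the Rademacher-type bounds in Theorem~\ref{theorem:ARC improves robustness} up to constant factors), so the proof remains structurally complete even if the exact constant differs by a factor of two. The bounded-range part, by contrast, is routine and I expect no difficulty there.
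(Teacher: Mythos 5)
Your part (i) is essentially the paper's argument: drop the negative entropy term, then bound the cross-entropy term using $z_i\in[-C,C]$. (The paper splits $-\sum_i p_i\log q^{(T)}_i=-\tfrac1T\sum_i p_iz_i+\log Z$ and bounds the two pieces by $C/T$ and $\log k+C/T$ separately, whereas you bound $-\log q^{(T)}_i\le\log\bigl(k e^{2C/T}\bigr)$ directly; these are the same computation.) No issues there.

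On part (ii), your suspicion is well-founded: the discrepancy you ran into is a genuine dual-norm error in the paper's own proof, not a gap in your argument. The paper computes $\partial_{z_i}\ell=-(p_i-q^{(T)}_i)/T$, concludes $\|\nabla_z\ell\|_\infty\le 1/T$, and then pairs this with $\|z-z'\|_\infty$ via the mean-value theorem --- but the Hölder pairing dual to $\|z-z'\|_\infty$ is $\|\nabla\ell\|_1$, exactly as you say, and $\|\nabla\ell\|_1=\tfrac1T\|p-q^{(T)}\|_1\le 2/T$. The constant $2/T$ is in fact sharp for the $\ell_\infty$ norm, so neither the quotient-space idea nor any other refinement will recover $1/T$: take $k=2$, $p=(1,0)$, so $\ell(z)=\log\bigl(1+e^{(z_2-z_1)/T}\bigr)$; perturbing $z=(a,-a)$ to $z'=(a-\delta,-a+\delta)$ (an $\ell_\infty$ displacement of $\delta$, and a zero-mean one, so it survives the quotient by $\mathbb{R}\mathbf{1}$) changes $\ell$ by approximately $2\delta/T$ once $(z_2-z_1)/T$ is large, which is attainable inside $[-C,C]^2$ for large $C$. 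The correct statements are: the map is $(1/T)$-Lipschitz w.r.t.\ the $\ell_1$ norm on logits, or $(2/T)$-Lipschitz w.r.t.\ $\ell_\infty$; and for the Euclidean case the direct bound $\|\nabla\ell\|_2=\tfrac1T\|p-q^{(T)}\|_2\le\sqrt2/T$ already beats the stated $\sqrt{k}/T$ (note also that once one has an $\ell_\infty$-Lipschitz constant $L$, the Euclidean constant is at most $L$, since $\|v\|_\infty\le\|v\|_2$; the paper's $\sqrt{k}$ inflation points the norm inequality the wrong way, though it remains a valid upper bound). Your fallback is the right resolution: the constant enters the downstream Rademacher bound only multiplicatively, so replacing $1/T$ by $2/T$ costs nothing structurally.
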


\begin{proof}[proof of Theorem~\ref{thm:kl_bounded_full}]
\textbf{(i) Boundedness.}
Write
\begin{equation}
\KL\!\bigl(p\;\|\;q^{(T)}\bigr)
=\sum_{i=1}^k p_i\log p_i-\sum_{i=1}^k p_i\log q^{(T)}_i .
\end{equation}
Since $x\mapsto x\log x$ is non-positive on $[0,1]$, the first term is
at most $0$, so
\begin{equation}
\KL\!\bigl(p\;\|\;q^{(T)}\bigr)
\le -\sum_{i=1}^k p_i\log q^{(T)}_i .
\end{equation}
For the soft-max,
$\log q^{(T)}_i = z_i/T-\log Z$, where
$Z:=\sum_{j=1}^k\exp(z_j/T)$.
Hence
\begin{equation}
-\sum_{i=1}^k p_i\log q^{(T)}_i
= -\frac1T\sum_{i=1}^k p_i z_i +\log Z .
\end{equation}
Because each $z_i\in[-C,C]$ and $\sum_i p_i=1$,
\begin{equation}
-\frac1T\sum_{i} p_i z_i \;\le\; \frac{C}{T}.
\end{equation}
Moreover, $z_i\le C$ implies $Z\le k\exp(C/T)$ and thus
$\log Z\le\log k + \tfrac{C}{T}$.
Combining the two parts yields the desired upper bound
$\log k + 2C/T$.  
Non-negativity of KL divergence gives the lower bound~$0$.

\medskip
\noindent\textbf{(ii) Lipschitz continuity.}
Differentiate $\ell$ w.r.t.\ $z_i$:
\begin{equation}
\partial_{z_i}\ell\bigl(p,q^{(T)}(z)\bigr)
= -\frac{p_i-q^{(T)}_i}{T}.
\end{equation}
Because $|p_i-q^{(T)}_i|\le 1$, we have
$|\partial_{z_i}\ell|\le 1/T$ for every coordinate.
Thus $\|\nabla_z\ell\|_{\infty}\le 1/T$,
and by the mean-value theorem,
\begin{equation}
\bigl|\ell(p,q^{(T)}(z))-\ell(p,q^{(T)}(z'))\bigr|
\le \frac1T\|z-z'\|_{\infty},
\quad\forall z,z'\in[-C,C]^k,
\end{equation}
so $L=1/T$ in the $\ell_\infty$ norm.
Since $\|v\|_2\le\sqrt{k}\|v\|_\infty$, the Euclidean Lipschitz
constant is at most $\sqrt{k}/T$.
\end{proof}

\begin{lemma}[Generalization Bound via Rademacher Complexity~\cite{bartlett2002rademacher}]
\label{lemma:bartlett}
Let \( \mathcal{H} \) be a class of functions mapping \( \mathcal{X} \to [0, B] \), and let \( S = \{x_1, \dots, x_n\} \) be an i.i.d. sample from distribution \( \mathcal{D} \). Then, for any \( \delta > 0 \), with probability at least \( 1 - \delta \), the following inequality holds for all \( h \in \mathcal{H} \):
\begin{equation}
\mathbb{E}_{x \sim \mathcal{D}}[h(x)]  \leq \frac{1}{n} \sum_{i=1}^n h(x_i) + 2 \MyRad(\mathcal{H}) + B \sqrt{\frac{\log(1/\delta)}{2n}}
\end{equation}
\end{lemma}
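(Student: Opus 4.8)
The plan is to prove this as the standard uniform-deviation bound: first concentrate the worst-case deviation around its mean using a bounded-differences argument, then control that mean by a symmetrization step that introduces Rademacher variables. Concretely, I would define the supremum deviation
\begin{equation}
\Phi(x_1,\dots,x_n) := \sup_{h\in\Hcal}\left(\E_{x\sim\mathcal{D}}[h(x)] - \frac{1}{n}\sum_{i=1}^n h(x_i)\right),
\end{equation}
so that the claimed inequality is equivalent to an upper bound on $\Phi(S)$ that holds with high probability and simultaneously for all $h\in\Hcal$ (the uniformity is automatic once $\Phi$, the supremum over $\Hcal$, is controlled).

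First I would establish concentration of $\Phi$. Replacing a single coordinate $x_i$ by any $x_i'$ changes only the term $\tfrac1n h(x_i)$ inside the supremum, and since every $h$ takes values in $[0,B]$ this perturbs $\Phi$ by at most $B/n$; hence $\Phi$ satisfies the bounded-differences condition with constants $c_i=B/n$. Applying McDiarmid's inequality with $\sum_i c_i^2 = B^2/n$ and equating the resulting tail probability to $\delta$ yields, with probability at least $1-\delta$,
\begin{equation}
\Phi(S) \le \E_S[\Phi(S)] + B\sqrt{\frac{\log(1/\delta)}{2n}}.
\end{equation}
This already supplies the final additive term, so it remains only to prove $\E_S[\Phi(S)] \le 2\MyRad(\Hcal)$.

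The heart of the argument — and the step I expect to require the most care — is the symmetrization bound on $\E_S[\Phi(S)]$. I would introduce an independent ghost sample $S'=\{x_1',\dots,x_n'\}$ drawn i.i.d.\ from $\mathcal{D}$, rewrite $\E_{x}[h(x)] = \E_{S'}[\tfrac1n\sum_i h(x_i')]$, and pull the expectation over $S'$ outside the supremum using Jensen's inequality together with convexity of the supremum, giving
\begin{equation}
\E_S[\Phi(S)] \le \E_{S,S'}\left[\sup_{h\in\Hcal}\frac1n\sum_{i=1}^n\bigl(h(x_i')-h(x_i)\bigr)\right].
\end{equation}
Because $x_i$ and $x_i'$ are identically distributed and independent, swapping their roles coordinatewise leaves the joint law invariant, so inserting Rademacher signs $\sigma_i\in\{\pm1\}$ does not change the expectation: $h(x_i')-h(x_i)$ may be replaced by $\sigma_i(h(x_i')-h(x_i))$. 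Splitting the supremum of the difference into two suprema by subadditivity of the supremum, and using that $-\sigma_i$ has the same distribution as $\sigma_i$, each resulting term equals the Rademacher complexity $\MyRad(\Hcal)=\E_{S,\sigma}[\sup_{h}\tfrac1n\sum_i\sigma_i h(x_i)]$, whence the whole expression is at most $2\MyRad(\Hcal)$. The main subtlety is justifying the sign insertion rigorously — arguing it per coordinate by conditioning on the remaining data so that the symmetry of each pair $(x_i,x_i')$ is exploited cleanly — after which chaining the three displayed inequalities produces exactly the stated bound.
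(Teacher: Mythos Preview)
Your argument is the standard and correct proof of this classical result: bounded-differences (McDiarmid) for the concentration term, followed by ghost-sample symmetrization to bound $\E_S[\Phi(S)]$ by $2\MyRad(\Hcal)$. The paper does not actually supply its own proof of this lemma; it is stated as a preliminary result with a citation to Bartlett and Mendelson, so there is nothing to compare against beyond noting that your proposal matches the textbook derivation underlying that reference.
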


\begin{lemma}[Empirical Risk Proximity]
\label{lemma:empirical-risk-alignment}
Let \( \tilde{x}_i := \alpha \tilde{x}_i^{\mathrm{res}} + (1 - \alpha) x_i \) with \( \alpha \in (0, 1) \), and let the corresponding datasets be
$
\tilde{\mathcal{C}}^{\mathrm{res}} := \{ (\tilde{x}_i^{\mathrm{res}}, y_i) \}_{i=1}^n, \quad
\tilde{\mathcal{C}}_{\mathrm{FADRM}} := \{ (\tilde{x}_i, y_i) \}_{i=1}^n.
$
Then for any model \( h \in \mathcal{H} \), the empirical risk difference is bounded by a negligible value:
\begin{equation}
\left| \widehat{\mathcal{L}}_{\mathrm{res}}(h) - \widehat{\mathcal{L}}_{\mathrm{FADRM}}(h) \right|
\leq L_\ell L_h (1 - \alpha) \cdot \Delta_1,
\quad \text{where } \Delta_1 := \frac{1}{n} \sum_{i=1}^n \| \tilde{x}_i^{\mathrm{res}} - x_i \|.
\end{equation}
\end{lemma}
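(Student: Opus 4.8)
The plan is to bound the loss difference sample‑by‑sample and then average, unrolling the two Lipschitz hypotheses in turn and finishing with a one‑line convex‑combination identity. First I would fix an arbitrary $h \in \mathcal{H}$, write the difference of the two empirical risks as a single average (the two datasets share the same labels $y_i$ by construction, only the inputs differ), and apply the triangle inequality:
\begin{equation}
\left| \widehat{\mathcal{L}}_{\mathrm{res}}(h) - \widehat{\mathcal{L}}_{\mathrm{FADRM}}(h) \right|
\;\le\; \frac{1}{n}\sum_{i=1}^n \left| \ell\!\left(h(\tilde{x}_i^{\mathrm{res}}), y_i\right) - \ell\!\left(h(\tilde{x}_i), y_i\right) \right|.
\end{equation}

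Next I would peel off the two Lipschitz constants. Since $\ell$ is $L_\ell$‑Lipschitz in its prediction slot (with $y_i$ held fixed, which is legitimate because the labels coincide in the two datasets), each summand is at most $L_\ell\,\lvert h(\tilde{x}_i^{\mathrm{res}}) - h(\tilde{x}_i)\rvert$; since $h$ is $L_h$‑Lipschitz on $\mathbb{R}^d$, this is in turn at most $L_\ell L_h\,\lVert \tilde{x}_i^{\mathrm{res}} - \tilde{x}_i \rVert$. The only remaining computation is the input displacement induced by the residual mixing: substituting $\tilde{x}_i = \alpha \tilde{x}_i^{\mathrm{res}} + (1-\alpha) x_i$ gives
\begin{equation}
\tilde{x}_i^{\mathrm{res}} - \tilde{x}_i \;=\; \tilde{x}_i^{\mathrm{res}} - \alpha \tilde{x}_i^{\mathrm{res}} - (1-\alpha) x_i \;=\; (1-\alpha)\bigl(\tilde{x}_i^{\mathrm{res}} - x_i\bigr),
\end{equation}
so $\lVert \tilde{x}_i^{\mathrm{res}} - \tilde{x}_i \rVert = (1-\alpha)\lVert \tilde{x}_i^{\mathrm{res}} - x_i \rVert$. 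Averaging over $i$ and recognizing $\Delta_1 = \frac1n\sum_i \lVert \tilde{x}_i^{\mathrm{res}} - x_i \rVert$ yields the claimed bound $L_\ell L_h (1-\alpha)\Delta_1$.

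As for difficulty: there is essentially no hard step here — the argument is a two‑stage Lipschitz unrolling followed by a trivial identity for convex combinations. The only points needing mild care are (a) that the Lipschitz property of $\ell$ is invoked only in the prediction argument while the label is held fixed, and (b) norm consistency, namely that $h$'s Lipschitz constant is measured in the same norm on $\mathbb{R}^d$ used to define $\Delta_1$, so no norm‑equivalence factors leak in. I would also note that the bound is \emph{uniform in $h$}, which is precisely what is needed downstream: it lets one transfer the Rademacher‑complexity comparison of Equation~\eqref{eq:residual-upper-bound} into the generalization‑bound comparison of Theorem~\ref{theorem:ARC improves robustness} by controlling the gap between $\widehat{\mathcal{L}}_{\mathrm{res}}$ and $\widehat{\mathcal{L}}_{\mathrm{FADRM}}$.
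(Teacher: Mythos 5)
Your proposal is correct and follows essentially the same route as the paper's own proof: a pointwise two-stage Lipschitz unrolling (first in the loss's prediction argument, then in $h$'s input), the identity $\tilde{x}_i^{\mathrm{res}} - \tilde{x}_i = (1-\alpha)(\tilde{x}_i^{\mathrm{res}} - x_i)$, and an average over $i$. Your added remarks on uniformity in $h$ and norm consistency are sensible but do not change the argument.
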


\begin{proof}[Proof of Lemma~\ref{lemma:empirical-risk-alignment}]
We begin by computing the pointwise difference in the loss:
\begin{equation}
\left| \ell(h(\tilde{x}_i^{\mathrm{res}}), y_i) - \ell(h(\tilde{x}_i), y_i) \right|.
\end{equation}
Since \( \ell \) is \( L_\ell \)-Lipschitz in the model output, and \( h \) is \( L_h \)-Lipschitz in the input, we have:
\begin{equation}
\left| \ell(h(\tilde{x}_i^{\mathrm{res}}), y_i) - \ell(h(\tilde{x}_i), y_i) \right|
\leq L_\ell \cdot | h(\tilde{x}_i^{\mathrm{res}}) - h(\tilde{x}_i) |
\leq L_\ell L_h \cdot \| \tilde{x}_i^{\mathrm{res}} - \tilde{x}_i \|.
\end{equation}
Note that:
\begin{equation}
\tilde{x}_i = \alpha \tilde{x}_i^{\mathrm{res}} + (1 - \alpha) x_i
\quad \Rightarrow \quad
\tilde{x}_i^{\mathrm{res}} - \tilde{x}_i = (1 - \alpha)(\tilde{x}_i^{\mathrm{res}} - x_i),
\end{equation}
so:
\begin{equation}
\| \tilde{x}_i^{\mathrm{res}} - \tilde{x}_i \| = (1 - \alpha) \| \tilde{x}_i^{\mathrm{res}} - x_i \|.
\end{equation}
Therefore,
\begin{equation}
\left| \ell(h(\tilde{x}_i^{\mathrm{res}}), y_i) - \ell(h(\tilde{x}_i), y_i) \right|
\leq L_\ell L_h (1 - \alpha) \| \tilde{x}_i^{\mathrm{res}} - x_i \|.
\end{equation}
Averaging over \( n \) samples:
\begin{equation}
\left| \widehat{\mathcal{L}}_{\mathrm{res}}(h) - \widehat{\mathcal{L}}_{\mathrm{FADRM}}(h) \right|
\leq \frac{1}{n} \sum_{i=1}^n L_\ell L_h (1 - \alpha) \| \tilde{x}_i^{\mathrm{res}} - x_i \|
= L_\ell L_h (1 - \alpha) \cdot \Delta_1.
\end{equation}
\end{proof}

\begin{corollary}[Lipschitz Convex Combination Bound]
\label{corollary:lipschitz-combination}
Let \( h: \R^d \to \R \) be an \( L \)-Lipschitz function. For any \( x, y \in \R^d \) and \( \alpha \in (0,1) \), define \( z = \alpha x + (1 - \alpha)y \). Then:
\begin{equation}
\left| h(z) - \left( \alpha h(x) + (1 - \alpha) h(y) \right) \right| \leq L \alpha(1 - \alpha) \|x - y\|
\end{equation}
In particular, this implies:
\begin{equation}
h(z) \leq \alpha h(x) + (1 - \alpha) h(y) + L \alpha(1 - \alpha) \|x - y\|
\end{equation}
\begin{equation}
h(z) \geq \alpha h(x) + (1 - \alpha) h(y) - L \alpha(1 - \alpha) \|x - y\|
\end{equation}
\end{corollary}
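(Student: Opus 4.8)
The plan is to reduce the statement to the scalar Lipschitz estimate along the segment joining $x$ and $y$, and then combine two endpoint comparisons via the triangle inequality. First I would rewrite the quantity to be bounded using the identity $\alpha + (1-\alpha) = 1$, which lets me insert $h(z)$ into both slots of the convex combination:
\[
h(z) - \bigl(\alpha h(x) + (1-\alpha)h(y)\bigr) = \alpha\bigl(h(z) - h(x)\bigr) + (1-\alpha)\bigl(h(z) - h(y)\bigr).
\]
This symmetric splitting is the crux of the argument, since it expresses the deviation of $h(z)$ from the chord value purely in terms of how far $h(z)$ sits from each endpoint value $h(x)$ and $h(y)$.

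Next I would compute the two displacement vectors explicitly. Because $z = \alpha x + (1-\alpha)y$, one has $z - x = (1-\alpha)(y - x)$ and $z - y = \alpha(x - y)$, hence $\|z - x\| = (1-\alpha)\|x-y\|$ and $\|z - y\| = \alpha\|x-y\|$. Applying the $L$-Lipschitz hypothesis to each term then gives $|h(z)-h(x)| \le L(1-\alpha)\|x-y\|$ and $|h(z)-h(y)| \le L\alpha\|x-y\|$. Combining these two estimates through the triangle inequality on the decomposition above yields
\[
\bigl|h(z) - (\alpha h(x) + (1-\alpha)h(y))\bigr| \le \alpha L(1-\alpha)\|x-y\| + (1-\alpha)L\alpha\|x-y\|,
\]
and the two one-sided inequalities follow immediately by dropping the outer absolute value and isolating $h(z)$.

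The one subtlety I anticipate, and the main obstacle, is the numerical constant. The clean two-term splitting produces $2L\alpha(1-\alpha)\|x-y\|$, a factor of two larger than the $L\alpha(1-\alpha)\|x-y\|$ appearing in the statement. This factor is not slack in the argument: taking $h(t)=|t|$ (which is $1$-Lipschitz), $x=-1$, $y=1$, $\alpha=\tfrac12$ gives $z=0$, so the left-hand side equals $|0 - 1| = 1$ while $L\alpha(1-\alpha)\|x-y\| = \tfrac12$. Hence the sharp constant is genuinely $2L\alpha(1-\alpha)\|x-y\|$, and I would state and prove the corollary with that constant (the downstream Lemmas are unaffected, since they only invoke the one-sided forms with an admissible Lipschitz constant).

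For a matching sharpness certificate I would alternatively reduce to one variable: set $g(s) := h\bigl(s x + (1-s)y\bigr)$, so that $g(\alpha) = h(z)$, $g(1)=h(x)$, $g(0)=h(y)$, and $g$ is $(L\|x-y\|)$-Lipschitz on $[0,1]$. Bounding the deviation of $g(\alpha)$ from the chord through $(0,g(0))$ and $(1,g(1))$ using $g(\alpha)\le \min\{g(0)+L\|x-y\|\alpha,\, g(1)+L\|x-y\|(1-\alpha)\}$ and maximizing over the admissible endpoint gap $|g(1)-g(0)|\le L\|x-y\|$ recovers exactly $2L\alpha(1-\alpha)\|x-y\|$, confirming that this constant is optimal and that no argument can reach the weaker $L\alpha(1-\alpha)\|x-y\|$ as written.
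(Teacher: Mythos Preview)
Your decomposition and Lipschitz estimates are correct, and your counterexample with $h(t)=|t|$, $x=-1$, $y=1$, $\alpha=\tfrac12$ is valid: it shows that the bound $L\alpha(1-\alpha)\|x-y\|$ as stated in the corollary is \emph{false} and that the sharp constant is $2L\alpha(1-\alpha)\|x-y\|$. There is nothing to compare your argument to, because the paper states this corollary in its preliminaries \emph{without proof}; it is simply asserted and then invoked in the proof of Theorem~\ref{theorem:ARC improves robustness} with the (incorrect) constant $L_h\alpha(1-\alpha)$ propagating into the error term $\varepsilon_i$.

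Your observation that the downstream argument survives the correction is also right in spirit: the proof of Theorem~\ref{theorem:ARC improves robustness} only uses the one-sided form to bound the Rademacher-complexity gap, and replacing $L_h\alpha(1-\alpha)\Delta$ by $2L_h\alpha(1-\alpha)\Delta$ merely tightens the assumed inequality~\eqref{eq:residual-upper-bound} by an extra additive term. So the overall structure of the paper's theorem is unaffected, though the precise constant in the hypothesis would need adjustment. In short: your proof is the correct one, your diagnosis of the factor-of-two error in the paper's stated corollary is accurate, and there is no alternative proof in the paper to contrast with.
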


\subsection{Bounded Information in BN-Aligned Synthetic Data}
\label{proof:Bounded Information in BN-Aligned Synthetic Data}

\begin{proof} [Proof of Theorem~\ref{theorem:Bounded Information in BN-Aligned Synthetic Data}]
Let $\mathcal{O}$ denote the original dataset. From it, a pretrained model $f_\theta$ is derived, which includes BatchNorm statistics $\{\mu_l, \sigma_l^2\}$. Each synthetic image $\tilde{x}_j$ in the distilled dataset $\mathcal{C}$ is generated by minimizing an objective function depending only on $f_\theta$ and a fixed label $\tilde{y}_j$.

We assume that each $\tilde{x}_j$ is generated independently given $f_\theta$, and that $f_\theta$ is a deterministic function of $\mathcal{O}$. Then, for each sample $(\tilde{x}_j, \tilde{y}_j)$, we have the Markov chain:
\begin{equation}
\mathcal{O} \to f_\theta \to \tilde{x}_j,
\end{equation}
By applying Lemma~\ref{lemma:data-processing}, we get:
\begin{equation}
I(\tilde{x}_j; \mathcal{O}) \le I(f_\theta; \mathcal{O}) = H(f_\theta),
\end{equation}
Now, by the chain rule of mutual information:
\begin{equation}
I(\mathcal{C}; \mathcal{O}) = I(\{\tilde{x}_j, \tilde{y}_j\}_{j=1}^{|\mathcal{C}|}; \mathcal{O}) \le \sum_{j=1}^{|\mathcal{C}|} I(\tilde{x}_j; \mathcal{O}) \le |\mathcal{C}| \cdot H(f_\theta),
\end{equation}
where we used the fact that $\tilde{y}_j$ is fixed and independent of $\mathcal{O}$ and the independence assumption across samples. Thus, the total information that the synthetic dataset $\mathcal{C}$ can retain about the original dataset $\mathcal{O}$ is bounded by the product of its size and the entropy of the model $f_\theta$.
\end{proof}

\subsection{\emph{ARC} improves the robustness of the distilled images}
\label{proof:ARC improves robustness}

\begin{proof}[Proof of Theorem~\ref{theorem:ARC improves robustness}]
Let \( \tilde{x}_i^{res}\) be a perturbation generated via distribution (running statistics) matching and prediction (cross entropy) matching, and let \( x_i  \) be a real image from the original dataset. 

Define the residual-injected sample $\tilde{x}_i$ as:
\begin{equation}
\tilde{x}_i := \alpha \tilde{x}_i^{res} + (1 - \alpha) x_i, \quad \alpha \in (0,1)
\end{equation}
Define the datasets:
\begin{itemize}
    \item \( \tilde{\mathcal{C}}^{res} = \{ \tilde{x}_i^{res}, \tilde{y}_i^{res} \}_{i=1}^n \): perturbation generated via distribution (running statistics) matching and prediction (cross entropy) matching,
    \item \( \mathcal{O} = \{ x_i, y_i \}_{i=1}^n \): selected patches from the original dataset,
    \item \( \tilde{\mathcal{C}}_{\mathrm{FADRM}} = \{ \tilde{x}_i, \tilde{y}_i \}_{i=1}^n \): residual-injected dataset.
\end{itemize}

We begin by bounding the Rademacher complexity of the residual-injected dataset \( \tilde{\mathcal{C}}_{\mathrm{FADRM}} = \{ \tilde{x}_i \}_{i=1}^n \), where \( \tilde{x}_i = \alpha \tilde{x}_i^{res} + (1 - \alpha) x_i \), using Lemma~\ref{lemma:bartlett} and Corollary~\ref{corollary:lipschitz-combination}.

From the definition:
\begin{equation}
\MyRad(\Hcal \circ \tilde{\mathcal{C}}_{\mathrm{FADRM}}) = \mathbb{E}_{\boldsymbol{\sigma}} \left[ \sup_{h \in \Hcal} \frac{1}{n} \sum_{i=1}^n \sigma_i h(\tilde{x}_i) \right]
\end{equation}
By Corollary~\ref{corollary:lipschitz-combination}, we have for each term:
\begin{equation}
h(\tilde{x}_i) \leq \alpha h(\tilde{x}_i^{res}) + (1 - \alpha) h(x_i) + \varepsilon_i,
\quad \text{where } |\varepsilon_i| \leq L_{h} \cdot \alpha(1 - \alpha) \| \tilde{x}_i^{res} - x_i \|
\end{equation}
Therefore:
\begin{equation}
\sum_{i=1}^n \sigma_i h(\tilde{x}_i) \leq \sum_{i=1}^n \sigma_i \left( \alpha h(\tilde{x}_i^{res}) + (1 - \alpha) h(x_i) \right) + \sum_{i=1}^n |\sigma_i \varepsilon_i|
\end{equation}
Using \( |\sigma_i| = 1 \), we get:
\begin{equation}
\sum_{i=1}^n |\sigma_i \varepsilon_i| \leq L_{h} \alpha(1 - \alpha) \sum_{i=1}^n \| \tilde{x}_i^{res} - x_i \| = n \cdot L_{h} \alpha(1 - \alpha) \cdot \Delta
\end{equation}
Divide by \( n \), take supremum and expectation:

\begin{equation}
\label{eq:generalization bound}
    \MyRad(\Hcal \circ \tilde{\mathcal{C}}_{\mathrm{FADRM}}) \leq \alpha \cdot \MyRad(\Hcal \circ \tilde{\mathcal{C}}^{res}) + (1 - \alpha) \cdot \MyRad(\Hcal \circ \mathcal{O}) + L_{h} \alpha(1 - \alpha) \cdot \Delta
\end{equation}

Rearrange the Inequality:
\begin{equation}
\mathfrak{R}_n(\mathcal{H} \circ \tilde{\mathcal{C}}_{\mathrm{FADRM}})
- \mathfrak{R}_n(\mathcal{H} \circ \tilde{\mathcal{C}}^{\mathrm{res}})
\leq (1 - \alpha) \left[
\mathfrak{R}_n(\mathcal{H} \circ \mathcal{O})
- \mathfrak{R}_n(\mathcal{H} \circ \tilde{\mathcal{C}}^{\mathrm{res}})
\right]
+ L_{h} \alpha(1 - \alpha) \cdot \Delta
\end{equation}

Multiply $2B$ on both sides and add a negligible positive value $\epsilon$ to the LHS:
\begin{equation}
\begin{aligned}
    2B \cdot \big[ 
        \mathfrak{R}_n(\mathcal{H} \circ \tilde{\mathcal{C}}_{\mathrm{FADRM}})
        - \mathfrak{R}_n(\mathcal{H} \circ \tilde{\mathcal{C}}^{\mathrm{res}})
    \big]
    < \;&
    2B(1 - \alpha) \cdot 
    \big[ 
        \mathfrak{R}_n(\mathcal{H} \circ \mathcal{O}) 
        - \mathfrak{R}_n(\mathcal{H} \circ \tilde{\mathcal{C}}^{\mathrm{res}})
    \big] \\
    &+ 2B L_{h} \alpha(1 - \alpha) \cdot \Delta + \epsilon
\end{aligned}
\end{equation}

As validated in Theorem~\ref{thm:kl_bounded_full}, when $T >$ 0, KL-divergence becomes a bounded \( B \)-range loss, which we then apply Lemma~\ref{lemma:bartlett} to formulate generalization error:
\begin{equation}
\mathcal{L}_{\mathrm{gen}}(h) \leq \widehat{\mathcal{L}}(h) + 2B \cdot \MyRad(\Hcal \circ S)
\end{equation}

Apply to both models:
\begin{align}
\mathcal{L}_{\mathrm{gen}}(h_{\mathrm{res}}) &\leq \widehat{\mathcal{L}}_{\mathrm{res}} + 2B \cdot \MyRad(\Hcal \circ \tilde{\mathcal{C}}^{res}) \label{eq:gen_rs} \\
\mathcal{L}_{\mathrm{gen}}(h_{\mathrm{FADRM}}) &\leq \widehat{\mathcal{L}}_{\mathrm{FADRM}} + 2B \cdot \MyRad(\Hcal \circ \tilde{\mathcal{C}}_{\mathrm{FADRM}}) \label{eq:gen_res}
\end{align}

Recall the lower bound for the difference of two ERMs established in Lemma~\ref{lemma:empirical-risk-alignment}, we then have:
\begin{equation}
     \widehat{\mathcal{L}}_{\mathrm{res}}(h) - \widehat{\mathcal{L}}_{\mathrm{FADRM}}(h)
\geq - L_\ell L_h (1 - \alpha) \cdot \Delta,
\quad \text{where } \Delta := \frac{1}{n} \sum_{i=1}^n \| \tilde{x}_i^{\mathrm{res}} - x_i \|.
\end{equation}

Given the assumption~\eqref{eq:residual-upper-bound}, we can then derive:
\begin{equation}
\label{eq:intermediate res}
    -L_\ell L_h (1 - \alpha) \cdot \Delta > 2B \Bigg\{ (1-\alpha) \left[ \MyRad(\Hcal \circ \mathcal{O}) - \MyRad(\Hcal \circ \tilde{\mathcal{C}}_{\mathrm{res})}\right] + L+{h} \alpha (1-\alpha)\cdot \Delta \Bigg\} +\epsilon
\end{equation}

where the RHS in Equation~\eqref{eq:intermediate res} is the upper bound for the difference in Rademacher Complexity, we then derive the following inequality:
\begin{equation}
    \widehat{\mathcal{L}}_{\mathrm{res}} - \widehat{\mathcal{L}}_{\mathrm{FADRM}} > 2B \cdot \left[  \MyRad(\Hcal \circ \tilde{\mathcal{C}}_{\mathrm{FADRM}}) - \MyRad(\Hcal \circ \tilde{\mathcal{C}}^{res})\right]
\end{equation}

which shows:
\begin{equation}
    \widehat{\mathcal{L}}_{\mathrm{res}} + 2B \cdot \MyRad(\Hcal \circ \tilde{\mathcal{C}}^{res}) > \widehat{\mathcal{L}}_{\mathrm{FADRM}} + 2B \cdot \MyRad(\Hcal \circ \tilde{\mathcal{C}}_{\mathrm{FADRM}})
\end{equation}

\end{proof}

\section{Optimization Details}
\label{sec:Optimization_Details}
Formally, the optimization process adheres to the principle of aligning the synthesized data with both the predictive behavior and the statistical distribution captured by a pretrained model $f_{\theta}$. Specifically, given a synthesized image $\tilde{x}_t$ at iteration $t$, the optimization objective is defined as:
\begin{align}
    \operatorname*{arg\,min}_{\tilde{x}_t} \quad \mathcal{L}(f_{\theta}(\tilde{x}_t), \tilde{\mathbf{y}}) + \mathcal{D}_{\text{global}}(\tilde{x}_t),
\end{align}
where $l(f_{\theta}(\tilde{x}_t), \tilde{\mathbf{y}})$ enforces consistency with the target predictions, while $\mathcal{D}_{\text{global}}(\tilde{x}_t)$ ensures alignment with the statistical distribution. Importantly, the parameters of $f_{\theta}$ remain fixed throughout the optimization, and only $\tilde{x}_t$ is updated.

The prediction alignment term is formulated as the cross-entropy loss computed over the synthesized batch:
\begin{equation}
    \mathcal{L}(f_{\theta}(\tilde{x}_t), \tilde{\mathbf{y}}) = - \frac{1}{N} \sum_{n=1}^{N} \sum_{i=1}^{C} \tilde{\mathbf{y}}_{n,i} \log f_{\theta}(\tilde{x}_t)_{n,i},
\end{equation}
where $N$ denotes the batch size, and $C$ represents the total number of classes. The alignment to the distribution in pretrained model is calculated as follows:
\begin{align*}
    \mathcal{D}_{\text{global}}(\tilde{x}_t) &= \sum_{l} \left\| \mu_{l} (\tilde{x}_t) - \mathbb{E}[\mu_{l}|\mathcal{O}] \right\|_{2} \\
    &\quad + \sum_{l} \left\| \sigma_{l}^2 (\tilde{x}_t) - \mathbb{E}[\sigma_{l}^2|\mathcal{O}] \right\|_{2} \\
    &= \sum_{l} \left\| \mu_{l} (\tilde{x}_t) - \mathbf{BN}_{l}^{\text{RM}} \right\|_{2} \\
    &\quad + \sum_{l} \left\| \sigma_{l}^2 (\tilde{x}_t) - \mathbf{BN}_{l}^{\text{RV}} \right\|_{2},
\end{align*}
where $\mathcal{O}$ denotes the original dataset, and $l$ indexes the layers of the model. The terms $\mathbf{BN}_{l}^{\text{RM}}$ and $\mathbf{BN}_{l}^{\text{RV}}$ correspond to the running mean and running variance of the Batch Normalization (BN) statistics at layer $l$. By minimizing $\mathcal{D}_{\text{global}}(\tilde{x}_t)$, the synthesized data is encouraged to exhibit statistical characteristics consistent with the original dataset, thereby preserving global information.

\section{Resampling via Bilinear Interpolation}
\label{Resampling}
Given an original image \( I : \mathbb{Z}^2 \to \mathbb{R}^C \) defined on discrete pixel coordinates, the continuous extension \( \tilde{I} : \mathbb{R}^2 \to \mathbb{R}^C \) at non-integer location \( (i', j') \in \mathbb{R}^2 \) is computed via bilinear interpolation as follows:
\begin{equation}
    \tilde{I}(i', j') = \sum_{m=0}^{1} \sum_{n=0}^{1} w_{m,n} \cdot I(i + m, j + n),
    \label{eq:bilinear_general}
\end{equation}
where \( i = \lfloor i' \rfloor \), \( j = \lfloor j' \rfloor \), \( \alpha = i' - i \in [0,1) \), \( \beta = j' - j \in [0,1) \), and the interpolation weights are defined by:
\begin{equation}
    w_{m,n} = (1 - m + (-1)^m \alpha)(1 - n + (-1)^n \beta).
\end{equation}

\noindent Explicitly, Equation~\eqref{eq:bilinear_general} expands to:
\begin{equation}
    \begin{aligned}
        \tilde{I}(i', j') &= (1 - \alpha)(1 - \beta) \cdot I(i, j) + \alpha(1 - \beta) \cdot I(i+1, j) \\
        &\quad + (1 - \alpha)\beta \cdot I(i, j+1) + \alpha\beta \cdot I(i+1, j+1),
    \end{aligned}
    \label{eq:bilinear_expanded}
\end{equation}

\noindent This interpolation scheme can be viewed as a separable approximation to the continuous image function, with weights derived from tensor-product linear basis functions over the unit square. It preserves differentiability with respect to the fractional coordinates \( (i', j') \), making it particularly amenable to gradient-based optimization frameworks.

\section{Limitations}
\label{sec:limitations}
While FADRM offers substantial improvements in computational efficiency and performance for dataset distillation, it also introduces several limitations. First, the method relies on the assumption that residual signals between synthetic and real data capture critical learning dynamics, which may not generalize across domains with highly abstract or non-visual modalities such as natural language or time-series data. Second, the use of distilled datasets can inadvertently reinforce biases present in the original data if not carefully audited, potentially leading to fairness concerns in downstream applications. From a broader societal perspective, while FADRM reduces the computation and resource demands of training large models, thereby contributing positively to sustainability, it may also facilitate the deployment of powerful models in low-resource or surveillance scenarios without adequate ethical oversight. Thus, responsible deployment and continued research into bias mitigation and cross-domain generalization are essential to ensure the safe and equitable application of FADRM.

\section{Experimental Setup}
\label{sec:experimental_setup}
Our method strictly follows the training configuration established in EDC to ensure a fair and consistent comparison across all evaluated approaches. Additionally, we re-run RDED and CV-DD under the same configuration and report the highest performance obtained between their original setup and the EDC configuration. This methodology guarantees a rigorous and equitable evaluation by accounting for potential variations in training dynamics across different settings.  

To establish an upper bound on performance across different backbone architectures (representing the results achieved when training models on the full original dataset) we adopt the hyperparameters specified in Table~\ref{tab:validate_whole}. These hyperparameters are carefully chosen to ensure full model convergence while effectively mitigating the risk of overfitting, thereby providing a reliable reference for evaluating the performance of distilled datasets.

\begin{table}[htbp!]
\centering
\footnotesize
\renewcommand{\arraystretch}{1}  
\resizebox{0.67\columnwidth}{!}{
\begin{tabular}{p{3.5cm} p{4.5cm}}
\toprule
\multicolumn{2}{l}{\textbf{Hyperparameters for Training the Original Dataset}} \\
\midrule
Optimizer      & SGD                        \\ 
Learning Rate & 0.1 \\
Weight Decay & 1e-4 \\
Momentum & 0.9 \\
Batch Size & 128\\
Loss Function  & Cross-Entropy               \\ 
Epochs         & 300                       \\
Augmentation   & RandomResizedCrop, \newline Horizontal Flip, CutMix \\
\bottomrule
\end{tabular}
}
\vspace{1em}
\caption{Hyperparameters for Training the Original Dataset.}
\label{tab:validate_whole}
\end{table}

\section{Hyper-Parameters Setting}
\label{sec:hyper-parameters}
In summary, the synthesis of distilled data follows consistent hyperparameter configurations, as outlined in Table~\ref{tab:recover}. Variations in hyperparameters are introduced exclusively during two phases: (1) the model Pre-training phase. and (2) the post-evaluation phase. These adjustments are carefully tailored based on the scale of the models and the specific characteristics of the datasets used. During the post-evaluation phase, we evaluate a total of four hyperparameter combinations, as detailed in Table~\ref{tab:settings}. Among these, the parameter $\eta$ plays a critical role in controlling the decay rate of the learning rate, as defined by the cosine learning rate schedule in Equation~\ref{eq:cosine_lr}. Specifically, a larger value of $\eta$ results in a slower decay rate, thereby preserving a higher learning rate for a longer duration during training.

\begin{equation}
\textit{Learning Rate} = 0.5 \times \left( 1 + \cos\left( \pi \frac{\textit{step}}{\textit{epochs} \times \eta} \right) \right)
\label{eq:cosine_lr}
\end{equation}

\begin{table}[ht]
\centering
\footnotesize
\renewcommand{\arraystretch}{1}  
\begin{tabularx}{0.7\linewidth}{X X} 
\toprule
\textbf{Hyperparameter} & \textbf{Value} \\
\midrule
Optimizer & Adam \\
Learning rate & 0.25 \\
Beta & (0.5, 0.9) \\
Epsilon & $1 \times 10^{-8}$ \\
Batch Size & 100 or 10 (if $C < 100$) \\
Iterations Budgets ($\mathcal{B}$) & 2,000 \\
Merge Ratio ($\alpha$) & 0.5 \\
Number of \emph{ARC} ($k$) & 3 \\
Downsampled Size ($D_{\text{ds}}$) & 200 (ImageNet-1k and Its subsets), Original Input Size (CIFAR-100, Tiny-ImageNet)\\ 
\name{} Model ($R$) & ResNet18 \\ 
\name{}+ Model ($R$) & ResNet18 DenseNet121 ShuffleNetV2 MobileNetV2 \\
Scheduler & Cosine Annealing \\
Augmentation & RandomResizedCrop, Horizontal Flip \\
\bottomrule
\end{tabularx}
\vspace{1em}
\caption{Hyperparameters for generating the distilled datasets.} 
\label{tab:recover}
\end{table}

\begin{table}[ht]
\centering
\footnotesize\
\renewcommand{\arraystretch}{1}  
\resizebox{0.35\columnwidth}{!}{ 
\begin{tabular}{lcc} 
\toprule
\textbf{Setting} & \textbf{Learning Rate} & \textbf{$\eta$} \\ 
\midrule
S1 & 0.001 & 1 \\ 
S2 & 0.001 & 2 \\ 
S3 & 0.0005 & 1 \\ 
S4 & 0.0005 & 2 \\ 
\bottomrule
\end{tabular}
}
\vspace{1em}
\caption{Hyperparameter settings with learning rate and $\eta$.} 
\label{tab:settings}
\end{table}

\subsection{CIFAR-100}
This subsection outlines the hyperparameter configurations employed in the CIFAR-100 experiments, providing the necessary details to ensure reproducibility in future research.

\noindent \textbf{Pre-training phase.} Table~\ref{tab:squeeze_cifar100} provides a comprehensive summary of the hyperparameters employed for training the models on the original CIFAR-100 dataset for generating the distilled dataset.

\begin{table}[ht]
\centering
\renewcommand{\arraystretch}{1} 
\begin{tabular}{cc} 
\toprule
\multicolumn{2}{c}{\textbf{Hyperparameters for Model Pre-training}} \\
\midrule
Optimizer & SGD\\
Learning Rate & 0.1 \\
Weight Decay & 1e-4 \\
Momentum & 0.9 \\
Batch Size & 128\\
Epoch & 50 \\
Scheduler & Cosine Annealing \\
Augmentation & RandomCrop, Horizontal Flip \\
Loss Function & Cross-Entropy \\
\bottomrule
\end{tabular}
\vspace{1em}
\caption{Hyperparameters for CIFAR-100 Pre-trained Models.} 
\label{tab:squeeze_cifar100}
\end{table}

\noindent \textbf{Evaluation Phase.} Table~\ref{tab:validate_cifar100} outlines the hyperparameter configurations employed for the post-evaluation phase on the Distilled CIFAR-100 dataset.

\begin{table}[htbp!]
\centering
\renewcommand{\arraystretch}{1} 
\resizebox{0.6\columnwidth}{!}{
\begin{tabular}{p{3.5cm} p{4.5cm}}
\toprule
\multicolumn{2}{c}{\textbf{Hyperparameters for Post-Eval on R18, R50 and R101}} \\
\midrule
Optimizer      & Adamw                         \\ 
S1  &  IPC1 (R50), IPC50 (R18,R50)              \\ 
S2  &  IPC10 (R18, R50)             \\ 
S3  &  IPC1 (R101), IPC10 (R101), IPC50 (R101)          \\ 
S4  &  IPC1 (R18)         \\ 
Soft Label Generation & BSSL \\
Loss Function  & KL-Divergence                \\ 
Batch Size     & 16                          \\ 
Epochs         & 1000                       \\
Augmentation   & RandomResizedCrop, \newline Horizontal Flip, CutMix \\
\bottomrule
\end{tabular}
}
\vspace{1em}
\caption{Hyperparameters for post-evaluation task on ResNet18, ResNet50 and ResNet101 for CIFAR-100.}
\label{tab:validate_cifar100}
\end{table}

\subsection{Tiny-ImageNet}
This part describes the hyperparameter settings used in the Tiny-ImageNet experiments, offering comprehensive details to facilitate reproducibility for future studies.

\noindent \textbf{Pre-training phase.} Table~\ref{tab:squeeze_tiny} presents a detailed overview of the hyperparameters used for model training on the original Tiny-ImageNet dataset.

\begin{table}[htbp!]
\centering
\renewcommand{\arraystretch}{1} 
\begin{tabular}{cc} 
\toprule
\multicolumn{2}{c}{\textbf{Hyperparameters for Model Pre-training}} \\
\midrule
Optimizer & SGD\\
Learning Rate & 0.1 \\
Weight Decay & 1e-4 \\
Momentum & 0.9 \\
Batch Size & 64\\
Epoch & 150 \\
Scheduler & Cosine Annealing \\
Augmentation & RandomCrop, Horizontal Flip \\
Loss Function & Cross-Entropy \\
\bottomrule
\end{tabular}
\vspace{1em}
\caption{Hyperparameters for Tiny-ImageNet Pre-trained Models.} 
\label{tab:squeeze_tiny}
\end{table}

\noindent \textbf{Evaluation Phase.} Table~\ref{tab:validate_tiny} details the hyperparameter settings used during the post-evaluation phase on the Distilled Tiny-ImageNet dataset.

\begin{table}[htbp!]
\centering
\renewcommand{\arraystretch}{1.3}  
\resizebox{0.6\columnwidth}{!}{
\begin{tabular}{p{3.5cm} p{4.5cm}}
\toprule
\multicolumn{2}{c}{\textbf{Hyperparameters for Post-Eval on R18, R50 and R101}} \\
\midrule
Optimizer      & Adamw                         \\ 
S1  &   IPC50 (R18)     \\ 
S2  &  IPC1 (R18) IPC10 (R18)     \\ 
S3  &    IPC50 (R50, R101)    \\ 
S4  &   IPC1 (R50, R101) IPC10 (R50, R101)      \\ 
Soft Label Generation & BSSL \\
Loss Function  & KL-Divergence                \\ 
Batch Size     & 16                          \\ 
Epochs         & 300 (IPC10, IPC50), 1000 (IPC1)                      \\
Augmentation   & RandomResizedCrop, \newline Horizontal Flip, CutMix \\
\bottomrule
\end{tabular}
}
\vspace{1em}
\caption{Hyperparameters for post-evaluation task on ResNet18, ResNet50 and ResNet101 for Tiny-ImageNet.}
\label{tab:validate_tiny}
\end{table}

\subsection{ImageNette}
This subsection describes the hyperparameter settings utilized in the ImageNette experiments, offering detailed information to facilitate reproducibility for subsequent studies.

\noindent \textbf{Pre-training phase.} Table~\ref{tab:squeeze_imagenette} summarizes the hyperparameters used for training models on the original ImageNette dataset to generate the distilled dataset, ensuring clarity and reproducibility.

\begin{table}[htbp!]
\centering
\renewcommand{\arraystretch}{1}  
\begin{tabular}{cc} 
\toprule
\multicolumn{2}{c}{\textbf{Hyperparameters for Model Pre-training}} \\
\midrule
Optimizer & SGD\\
Learning Rate & 0.01 \\
Weight Decay & 1e-4 \\
Momentum & 0.9 \\
Batch Size & 128\\
Epoch & 300 \\
Scheduler & Cosine Annealing \\
Augmentation & RandomReizeCrop, Horizontal Flip \\
Loss Function & Cross-Entropy \\
\bottomrule
\end{tabular}
\vspace{1em}
\caption{Hyperparameters for ImageNette Pre-trained Models.} 
\label{tab:squeeze_imagenette}
\end{table}

\noindent \textbf{Evaluation Phase.} Table~\ref{tab:validate_imagenette} details the hyperparameter settings applied during the post-evaluation phase on the Distilled ImageNette dataset.

\begin{table}[htbp!]
\centering
\renewcommand{\arraystretch}{1}  
\resizebox{0.6\columnwidth}{!}{
\begin{tabular}{p{3.5cm} p{4.5cm}}
\toprule
\multicolumn{2}{c}{\textbf{Hyperparameters for Post-Eval on R18, R50 and R101}} \\
\midrule
Optimizer      & Adamw                         \\ 
S2  &     IPC50 (R101)         \\ 
S3  &    IPC10 (R18, R50)   IPC50(R50)   \\ 
S4  &  IPC1(R18, R50, R101) IPC10 (R101) IPC50 (R18)    \\ 
Soft Label Generation & BSSL \\
Loss Function  & KL-Divergence                \\ 
Batch Size     & 16                          \\ 
Epochs         & 300                    \\
Augmentation   & RandomResizedCrop, \newline Horizontal Flip, CutMix \\
\bottomrule
\end{tabular}
}
\vspace{1em}
\caption{Hyperparameters for post-evaluation task on ResNet18, ResNet50 and ResNet101 for ImageNette.}
\label{tab:validate_imagenette}
\end{table}

\subsection{ImageWoof}
This section describes the hyperparameter settings used in the ImageWoof experiments, offering detailed information to facilitate reproducibility for future studies.

\noindent \textbf{Pre-training phase.} Table~\ref{tab:squeeze_imagewoof} presents a detailed overview of the hyperparameters utilized for training models on the original ImageWoof dataset to produce the distilled dataset.

\begin{table}[htbp!]
\centering
\renewcommand{\arraystretch}{1}  
\begin{tabular}{cc} 
\toprule
\multicolumn{2}{c}{\textbf{Hyperparameters for Model Pre-training}} \\
\midrule
Optimizer & SGD\\
Learning Rate & 0.1 \\
Weight Decay & 1e-4 \\
Momentum & 0.9 \\
Batch Size & 128\\
Epoch & 50 \\
Scheduler & Cosine Annealing \\
Augmentation & RandomResizeCrop, Horizontal Flip \\
Loss Function & Cross-Entropy \\
\bottomrule
\end{tabular}
\vspace{1em}
\caption{Hyperparameters for ImageWoof Pre-trained Models.} 
\label{tab:squeeze_imagewoof}
\end{table}

\noindent \textbf{Evaluation Phase.} Table~\ref{tab:validate_imagewoof} presents the hyperparameter settings utilized during the post-evaluation stage on the Distilled Imagewoof dataset, detailing the configurations applied for performance assessment.

\begin{table}[htbp!]
\centering
\renewcommand{\arraystretch}{1}  
\resizebox{0.6\columnwidth}{!}{
\begin{tabular}{p{3.5cm} p{4.5cm}}
\toprule
\multicolumn{2}{c}{\textbf{Hyperparameters for Post-Eval on R18, R50 and R101}} \\
\midrule
Optimizer      & Adamw                         \\ 
S1  &     IPC1 (R101)           \\ 
S2  &    IPC50 (R18)      \\ 
S3  &   IPC10 (R18, R50)  IPC50 (R50, R101)     \\ 
S4  &   IPC1 (R18, R50) IPC10 (R101)        \\ 

Soft Label Generation & BSSL \\
Loss Function  & KL-Divergence                \\ 
Batch Size     & 16                          \\ 
Epochs         & 300                       \\
Augmentation   & RandomResizedCrop, \newline Horizontal Flip, CutMix \\
\bottomrule
\end{tabular}
}
\vspace{1em}
\caption{Hyperparameters for post-evaluation task on ResNet18, ResNet50 and ResNet101 for ImageWoof.}
\label{tab:validate_imagewoof}
\end{table}

\subsection{ImageNet-1k}
This subsection outlines the hyperparameter configurations employed in the ImageNet1k experiments, providing the necessary details to ensure reproducibility in future research.

\noindent \textbf{Pre-training phase.} For ImageNet-1K, we employed the official PyTorch pretrained models, which have been extensively trained on the full ImageNet-1K dataset. 

\begin{table}[htbp!]
\centering
\renewcommand{\arraystretch}{1.2}  
\resizebox{0.6\columnwidth}{!}{
\begin{tabular}{p{3.5cm} p{4.5cm}}
\toprule
\multicolumn{2}{c}{\textbf{Hyperparameters for Post-Eval on R18, R50 and R101}} \\
\midrule
Optimizer      & Adamw                         \\ 
S1  &       IPC50 (R18, R50)        \\ 
S2  &      IPC1 (R18) IPC10 (R18, R50, R101)       \\ 
S3  &    IPC50 (R101)    \\ 
S4  &     IPC1 (R50, R101)    \\ 
Soft Label Generation & BSSL \\
Loss Function  & KL-Divergence                \\ 
Batch Size     & 16                          \\ 
Epochs         & 300                       \\
Augmentation   & RandomResizedCrop, \newline Horizontal Flip, CutMix \\
\bottomrule
\end{tabular}
}
\vspace{1em}
\caption{Hyperparameters for post-evaluation task on ResNet18, ResNet50 and ResNet101 for ImageNet-1k.}
\label{tab:validate_imagenet-1k}
\end{table}

\noindent \textbf{Evaluation Phase.} Table~\ref{tab:validate_imagenet-1k} provides a detailed overview of the hyperparameter settings used during the post-evaluation phase on the Distilled ImageNet-1k dataset.

\section{Additional Distilled Data Visualization}
Additional visualizations of the distilled data generated by \name{} are provided in Fig.~\ref{fig:cifar100_vis} (CIFAR-100), Fig.~\ref{fig:tiny_vis} (Tiny-ImageNet), Fig.~\ref{fig:imageNette_vis} (ImageNette), Fig.~\ref{fig:imagewoof_vis} (ImageWoof), and Fig.~\ref{fig:imageNet1k_vis} (ImageNet-1K). Furthermore, enhanced versions - \name{}+ are presented in Fig.~\ref{fig:cifar100_vis+} (CIFAR-100), Fig.~\ref{fig:tiny_vis+} (Tiny-ImageNet), Fig.~\ref{fig:imageNette_vis+} (ImageNette), Fig.~\ref{fig:imagewoof_vis+} (ImageWoof), and Fig.~\ref{fig:imageNet1k_vis+} (ImageNet-1K).

\begin{figure*}[ht]
    \centering
    \includegraphics[width=1\textwidth]{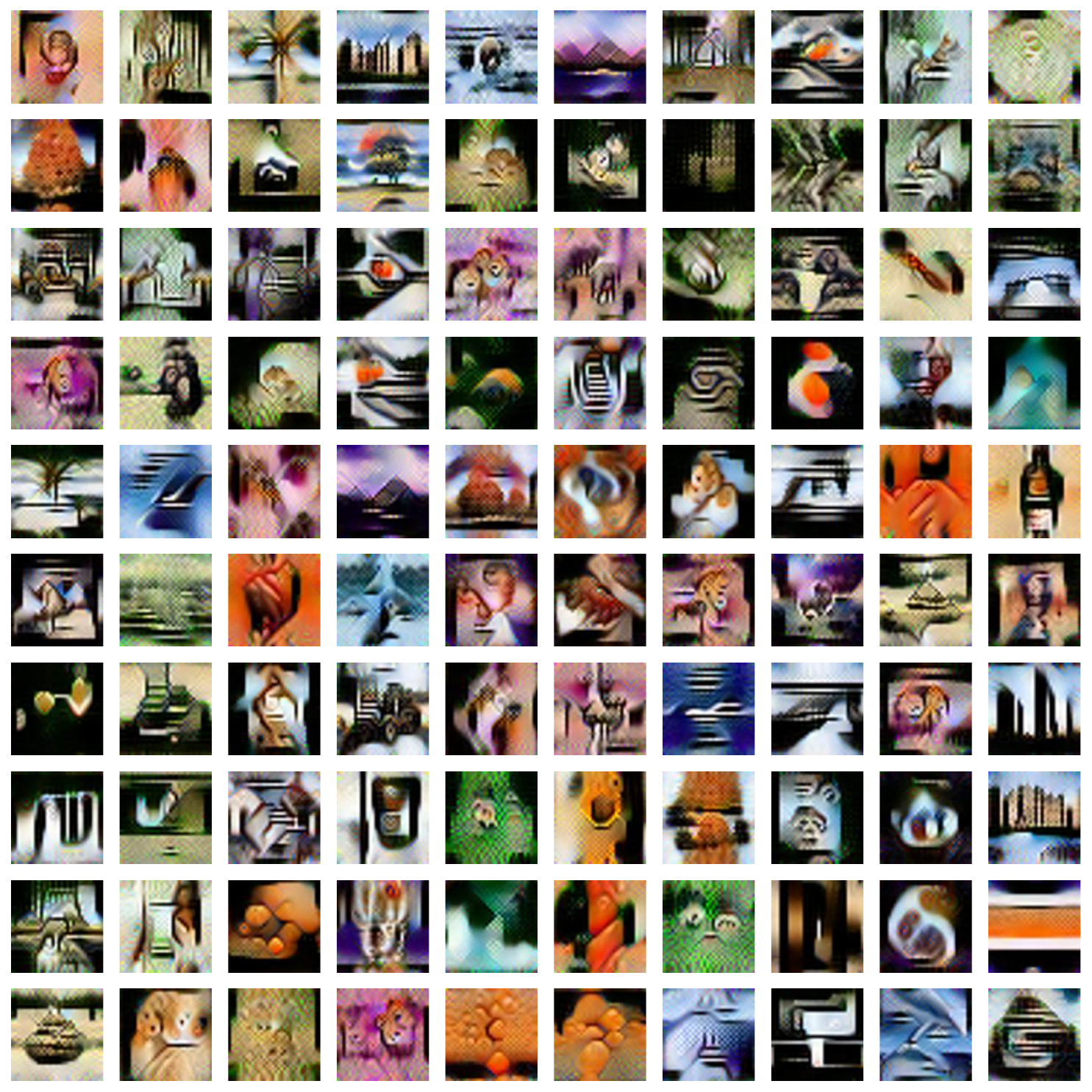}
    \caption{Visualization of synthetic data on CIFAR-100 generated by \name{}.}
    \label{fig:cifar100_vis}
\end{figure*}

\begin{figure*}[ht]
    \centering
    \includegraphics[width=1\textwidth]{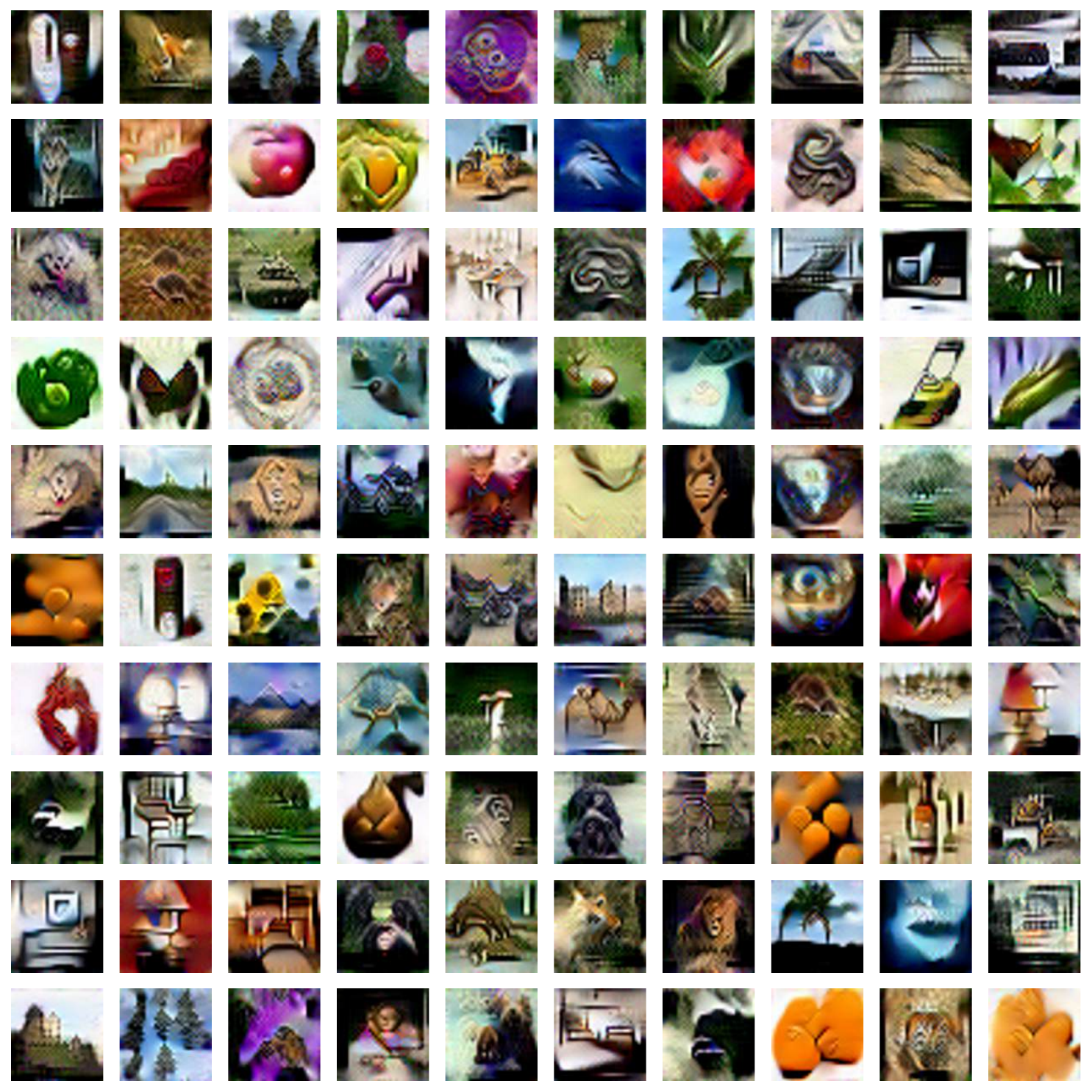}
    \caption{Visualization of synthetic data on CIFAR-100 generated by \name{}+.}
    \label{fig:cifar100_vis+}
\end{figure*}

\begin{figure*}[ht]
    \centering
    \includegraphics[width=1\textwidth]{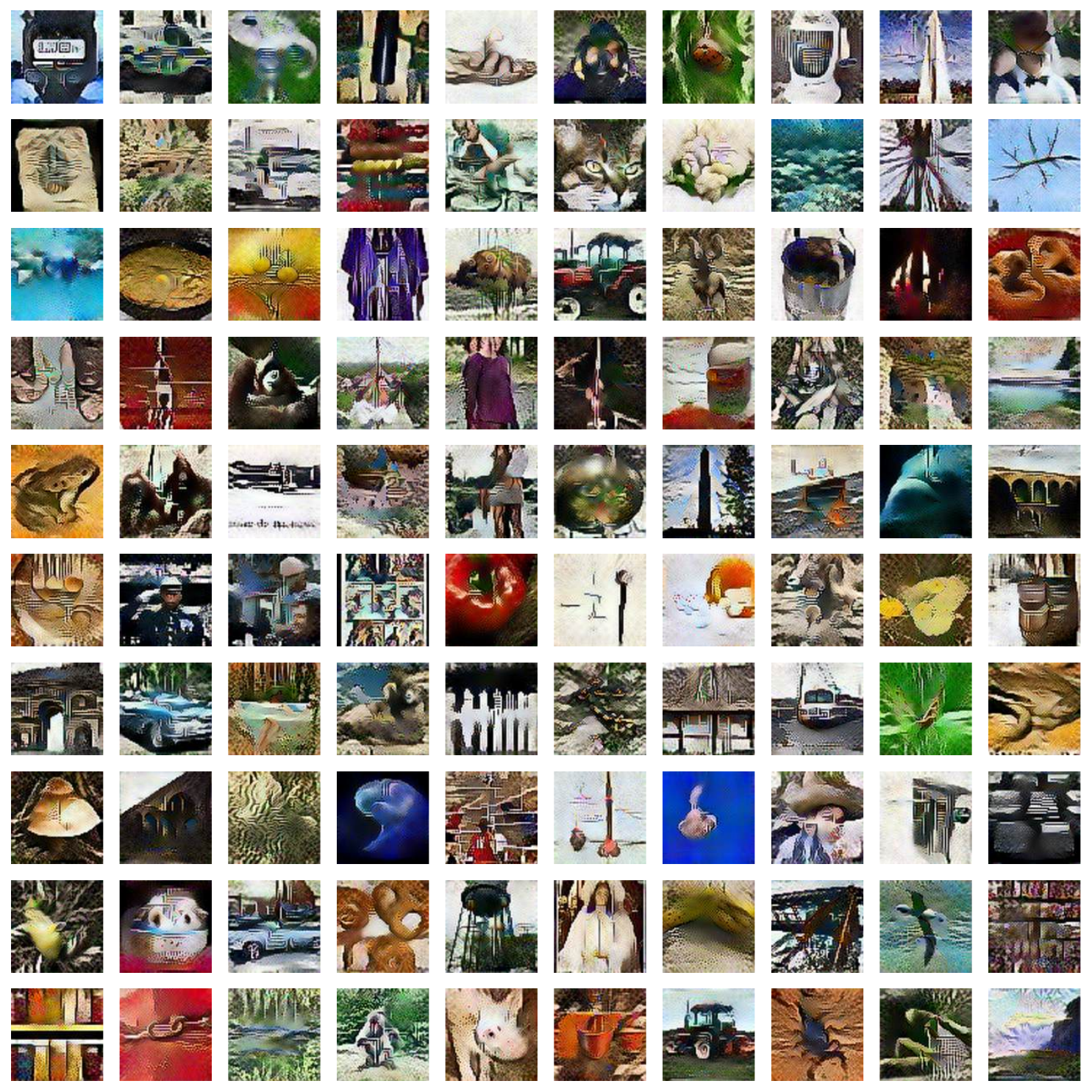}
    \caption{Visualization of synthetic data on Tiny-ImageNet generated by \name{}.}
    \label{fig:tiny_vis}
\end{figure*}

\begin{figure*}[ht]
    \centering
    \includegraphics[width=1\textwidth]{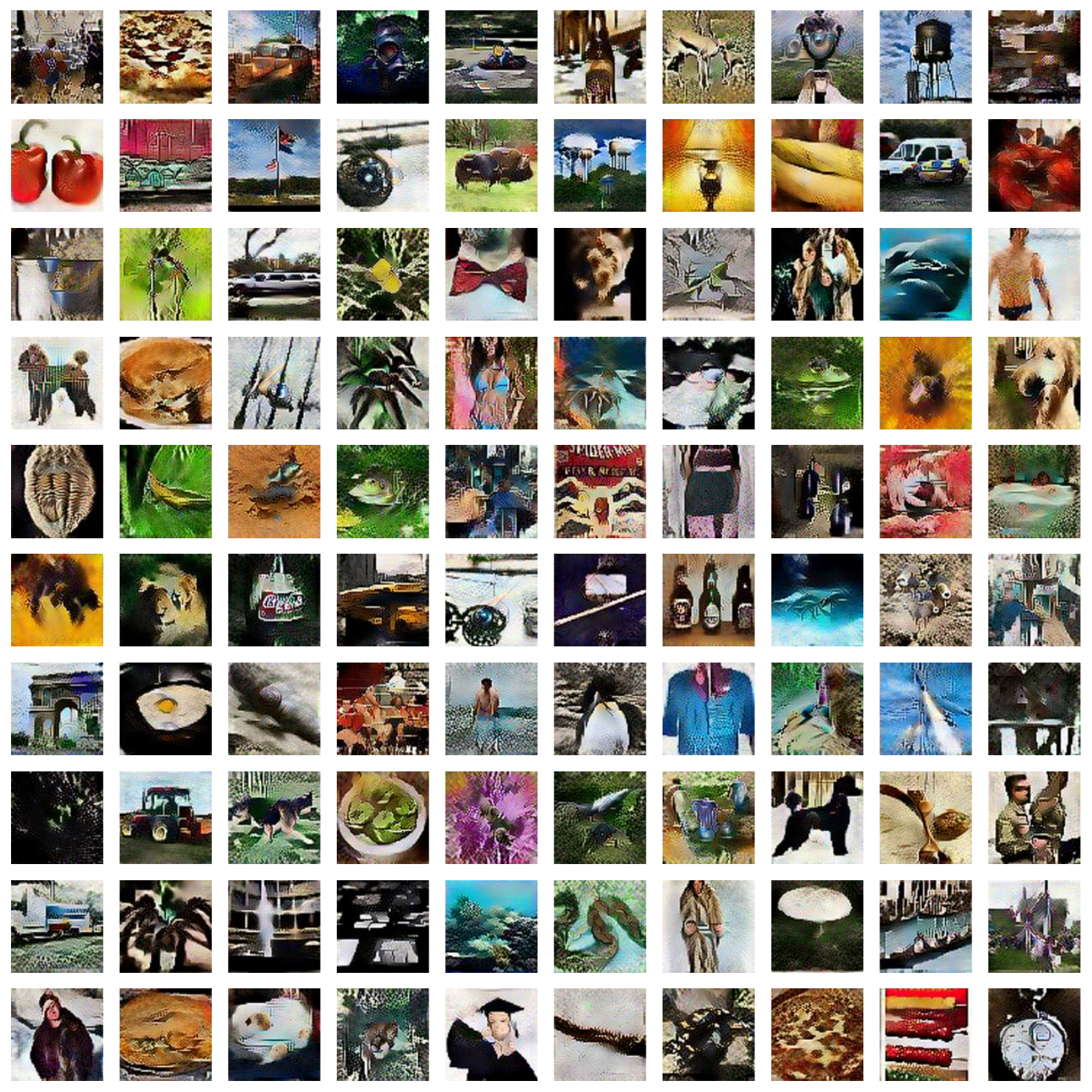}
    \caption{Visualization of synthetic data on Tiny-ImageNet generated by \name{}+.}
    \label{fig:tiny_vis+}
\end{figure*}

\begin{figure*}[ht]
    \centering
    \includegraphics[width=1\textwidth]{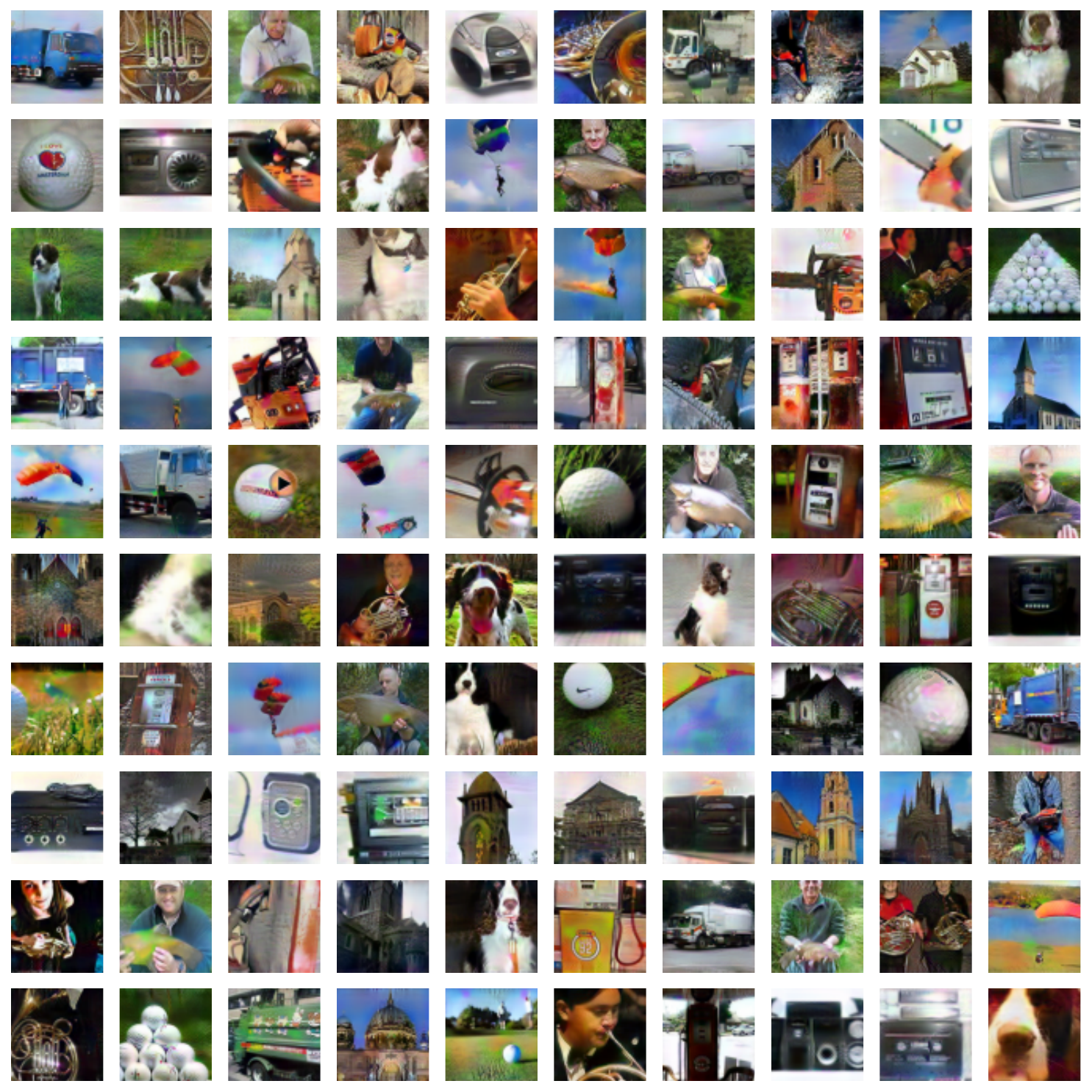}
    \caption{Visualization of synthetic data on ImageNette generated by \name{}.}
    \label{fig:imageNette_vis}
\end{figure*}
\begin{figure*}[ht]
    \centering
    \includegraphics[width=1\textwidth]{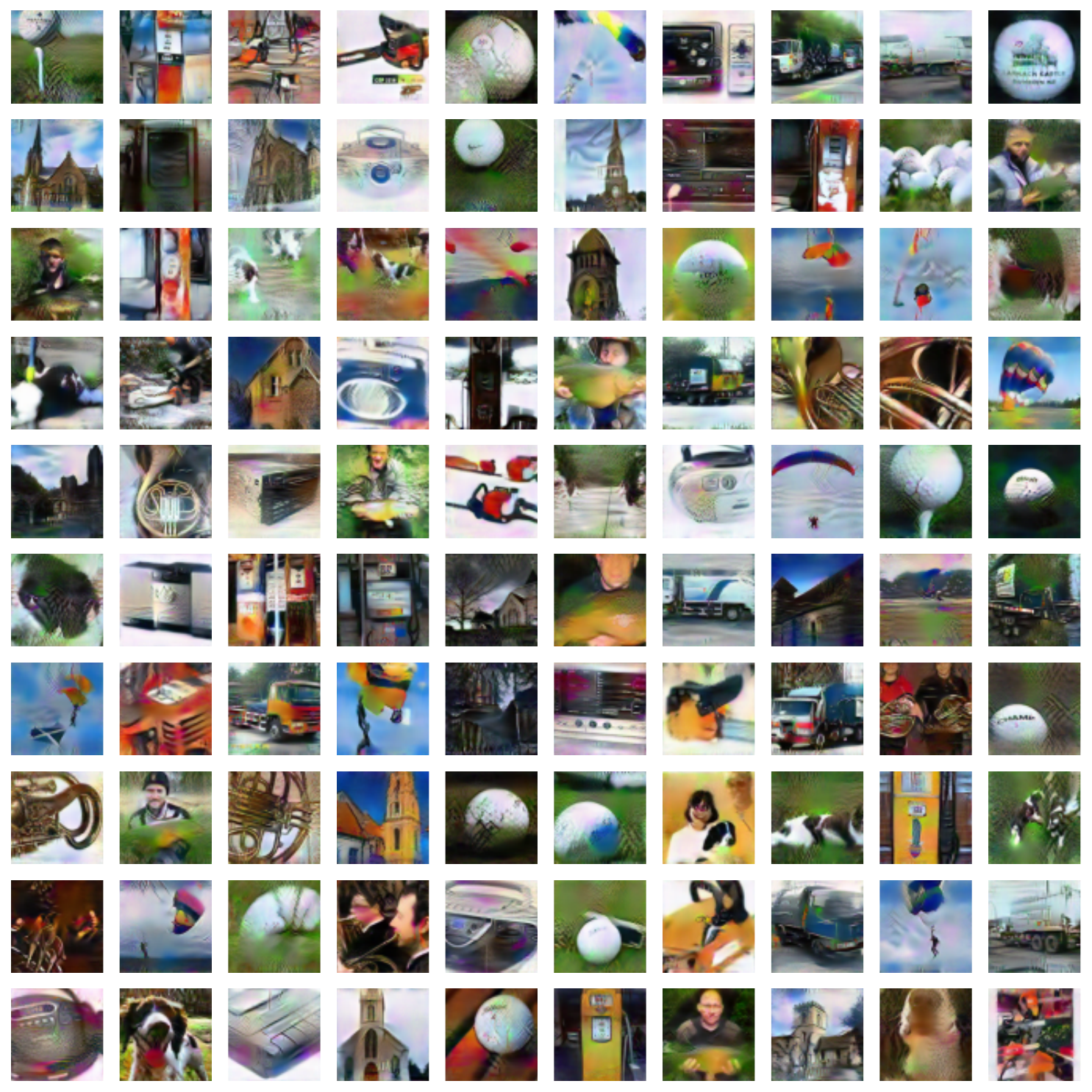}
    \caption{Visualization of synthetic data on ImageNette generated by \name{}+.}
    \label{fig:imageNette_vis+}
\end{figure*}

\begin{figure*}[ht]
    \centering
    \includegraphics[width=1\textwidth]{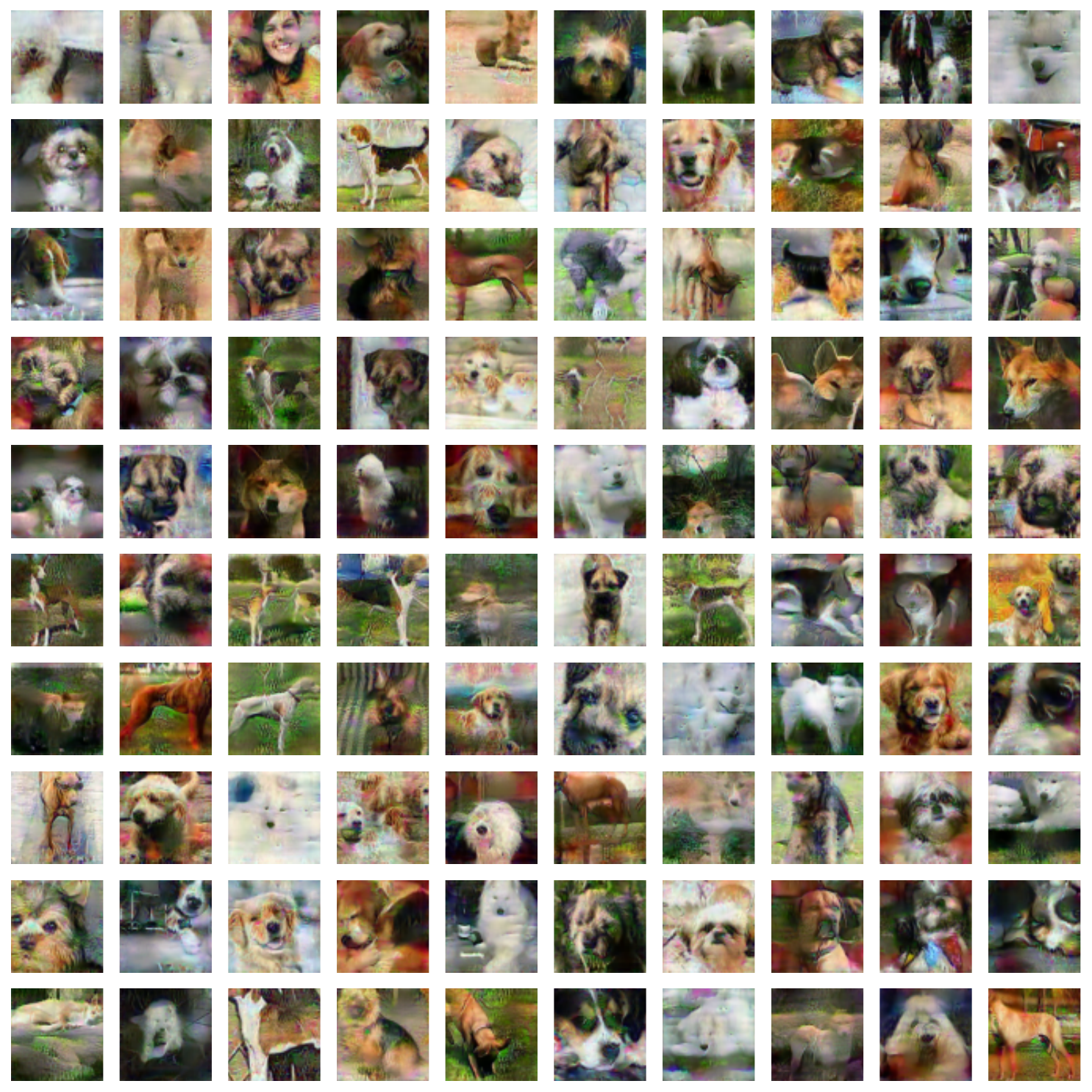}
    \caption{Visualization of synthetic data on ImageWoof generated by \name{}.}
    \label{fig:imagewoof_vis}
\end{figure*}
\begin{figure*}[ht]
    \centering
    \includegraphics[width=1\textwidth]{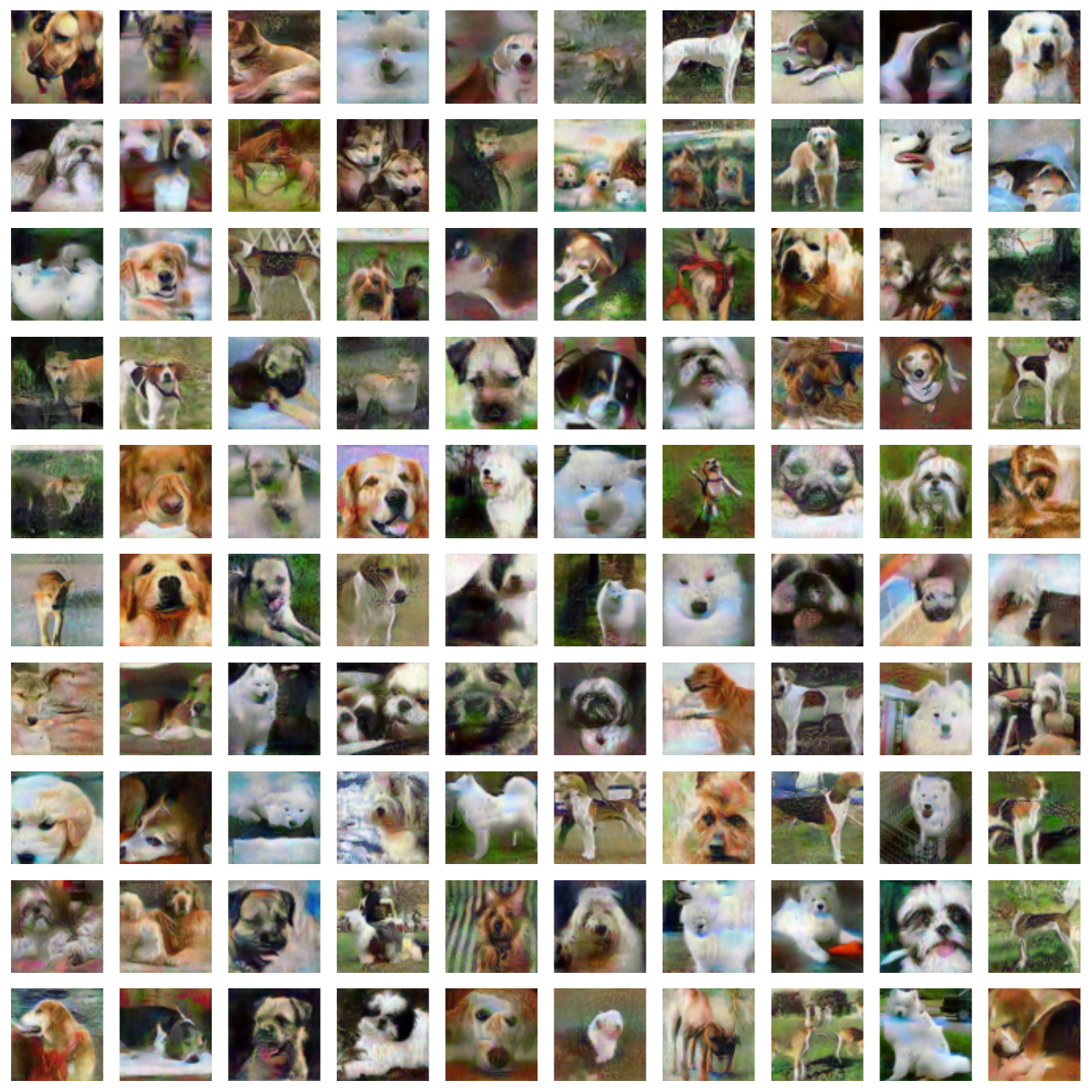}
    \caption{Visualization of synthetic data on ImageWoof generated by \name{}+.}
    \label{fig:imagewoof_vis+}
\end{figure*}

\begin{figure*}[ht]
    \centering
    \includegraphics[width=1\textwidth]{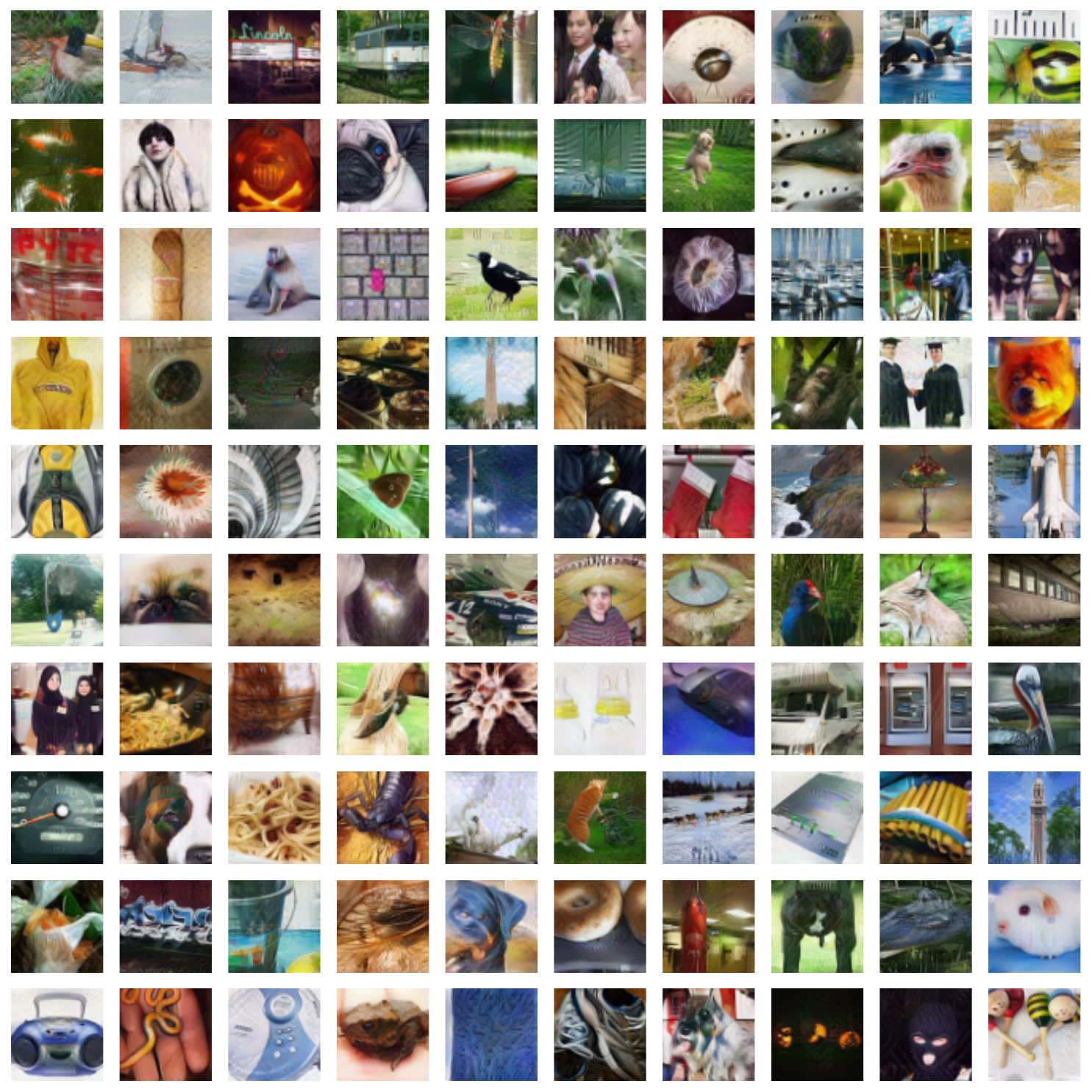}
    \caption{Visualization of synthetic data on ImageNet-1k generated by \name{}.}
    \label{fig:imageNet1k_vis}
\end{figure*}
\begin{figure*}[ht]
    \centering
    \includegraphics[width=1\textwidth]{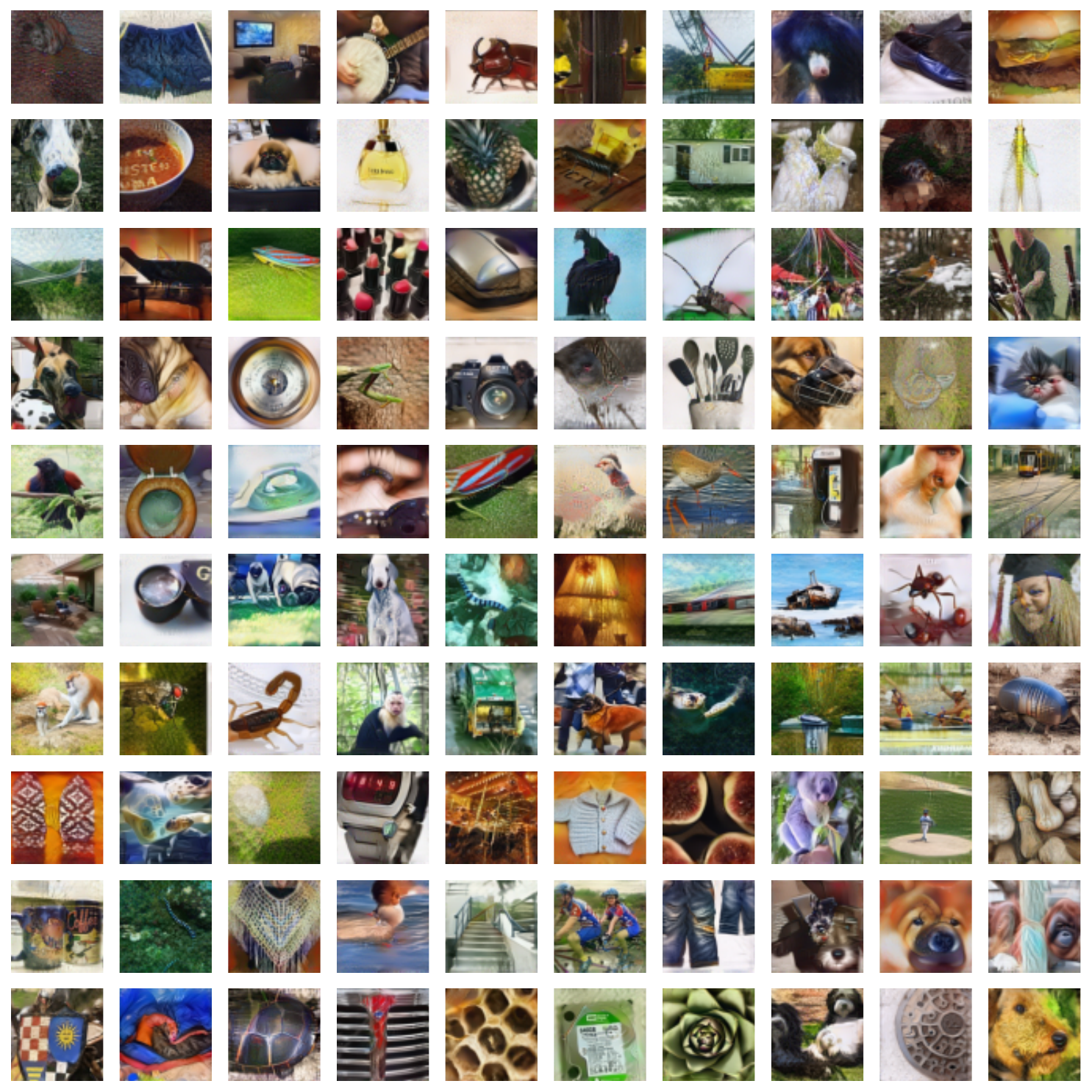}
    \caption{Visualization of synthetic data on ImageNet-1k generated by \name{}+.}
    \label{fig:imageNet1k_vis+}
\end{figure*}

\end{document}